%% file: iclr2026_conference.tex
\documentclass{article} 

\input{sections/0_preamble}

\title{\ours: Adaptive Data Selection for\\ Uncertainty-Aware Cooperative Perception}

\author{
Shilpa Mukhopadhyay\textsuperscript{1,2}\thanks{Work done while affiliated with the University of California, Riverside.},
Amit Roy-Chowdhury\textsuperscript{1},
Hang Qiu\textsuperscript{1}\\
\textsuperscript{1}University of California, Riverside, USA. 
\textsuperscript{2}New Jersey Institute of Technology, New Jersey, USA. \\
\texttt{\{sm3934\}@njit.edu, \{amitrc, hangq\}@ucr.edu}
}

\iclrfinalcopy
\begin{document}

\maketitle

\input{sections/0_abstract.tex}
\input{sections/1_intro.tex}
\input{sections/2_related.tex}
\input{sections/3_prob.tex}
\input{sections/4_design.tex}
\input{sections/5_train.tex}
\input{sections/6_eval.tex}
\input{sections/8_conclusion.tex}
\bibliography{reference.bib}
\bibliographystyle{iclr2026_conference}
\input{sections/z_appendix}
\end{document}

%% file: sections/0_preamble.tex
\usepackage{iclr2026_conference,times}

\usepackage{enumitem}

\input{math_commands.tex}

\usepackage{hyperref}

\usepackage{url}
\usepackage{enumitem}

\usepackage{xcolor} 
\usepackage{amssymb} 

\usepackage{tabularx}
\usepackage{booktabs} 
\usepackage{multirow} 
\usepackage{geometry} 
\usepackage{subcaption}
\usepackage{graphicx}
\usepackage{wrapfig}
\usepackage{amsthm}
\usepackage{array}

\newtheorem{proposition}{Proposition}

\usepackage{float}
\usepackage{xspace}
\usepackage{ulem}

\newcommand{\ours}{\textsc{CooperTrim}\xspace}
\newcommand{\shilpa}[1]{\textcolor{black}{#1}}

\providecommand\parab[1]{\noindent\textbf{#1}}

\newcommand{\ie}{\textit{i.e.,}\xspace}
\newcommand{\eg}{\textit{e.g.,}\xspace}

%% file: math_commands.tex
\usepackage{amsmath,amsfonts,bm}

\def\eqref#1{equation~\ref{#1}}

\def\1{\bm{1}}

\DeclareMathAlphabet{\mathsfit}{\encodingdefault}{\sfdefault}{m}{sl}
\SetMathAlphabet{\mathsfit}{bold}{\encodingdefault}{\sfdefault}{bx}{n}



%% file: sections/0_abstract.tex
\begin{abstract}
Cooperative perception enables autonomous agents to share encoded representations over wireless communication to enhance each other's live situational awareness. However, the tension between the limited communication bandwidth and the rich sensor information hinders its practical deployment. 
Recent studies have explored selection strategies that share only a subset of features per frame while striving to keep the performance on par.  
Nevertheless, the bandwidth requirement still stresses current wireless technologies.
To fundamentally ease the tension, we take a proactive approach, exploiting the temporal continuity to identify features that capture environment dynamics, while avoiding repetitive and redundant transmission of static information.  
By incorporating temporal awareness, agents are empowered to dynamically adapt the sharing quantity according to environment complexity.
We instantiate this intuition into an adaptive selection framework, \ours, which introduces a novel conformal temporal uncertainty metric to gauge feature relevance, and a data-driven mechanism to dynamically determine the sharing quantity.  
To evaluate \ours, we take semantic segmentation \shilpa{and 3D detection }as example task\shilpa{s}. Across multiple open-source cooperative segmentation \shilpa{and detection} models, \ours achieves up to 80.28\%  \shilpa{and 72.52\%} bandwidth reduction \shilpa{respectively} while maintaining a comparable accuracy. Relative to other selection strategies, \ours also improves IoU by as much as 45.54\% with up to 72\% less bandwidth. \shilpa{Combined with compression strategies, \ours can further reduce bandwidth usage to as low as 1.46\% without compromising IoU performance.} Qualitative results show \ours gracefully adapts to environmental dynamics, \shilpa{localization error, and communication latency,} demonstrating flexibility and paving the way for real-world deployment. Project website link for code and pretrained models:
\href{https://cisl.ucr.edu/CooperTrim}{\textcolor{cyan}{https://cisl.ucr.edu/CooperTrim}}.

\end{abstract}

%% file: sections/1_intro.tex
\section{Introduction}
Cooperative perception~\citep{wang2020v2vnet, xu2023cobevt} enables autonomous agents to exchange information about invisible or uncertain regions to enhance a range of tasks, including detection~\citep{qiu2022autocast}, prediction~\citep{wang2025cmp},  mapping~\citep{carmap}, and navigation~\citep{coopernaut, cui2026coopreflect}. The benefits of additional vantage points come at the cost of communication.
Balancing bandwidth overhead and accuracy, modern approaches adopt intermediate (over early and late) fusion~\citep{wang2020v2vnet, xu2023cobevt} to share task-oriented lightweight encoded representations of the environment. 
To further improve communication efficiency in intermediate fusion, three strategies have been proposed: 
\textit{compression}~\citep{wang2020v2vnet, xu2022v2x}, which reduces the size of the features but risks information loss if targeted at higher compression ratio (\ie lossy compression); 
\textit{selection}~\citep{hu2023collaboration, liu2020who2com}, which transmits only useful data or selects agents; 
and \textit{hybrid}~\citep{yuan2022keypoints, yang2023what2comm}, combining both for optimal bandwidth reduction. 
For example, Where2comm~\citep{hu2022where2comm} uses threshold-based spatial confidence maps for feature selection, 
UniSense~\citep{ren2025unisense} focuses on uncertainty-driven data exchange,
and SwissCheese~\citep{xie2024swisscheese} exploits the disparity in semantic information on features between different spatial regions on different channels to perform fixed threshold-based selection.
While these approaches reduce the exchange volume, the demand still strains wireless bandwidth~\citep{mo2025see}.

Fundamentally, the mismatch between limited communication bandwidth and the richness of sensor information hinders the practical deployment of cooperative perception. 
To address this mismatch, we take a proactive adaptation approach. Our intuition is twofold: 1) rather than sharing a smaller yet static volume of features per frame, sharing should be made on demand at a variable volume depending on the recipient's cognitive challenges for its surrounding environment; 2) rather than sharing frame by frame independently, cooperation should leverage the temporal context just as the rest of the autonomy stack does to enhance temporal comprehension while reducing repetitive and redundant sharing.  
Consider a scenario where the ego (recipient) agent is confident in most features from its sensor data, but collaborating agents may have varying confidence levels of elements in the scene. In such cases, the ego can learn from its temporal context to request only the uncertain features. For instance, a complex scenario (\eg{multiple road intersections}) should demand more features over consecutive frames than a simple scenario (\eg{no intersections}).

Building on top of these insights, in this paper, we present \ours, an adaptive cooperative perception framework, which introduces a novel conformal temporal uncertainty metric to gauge feature
relevance, and a data-driven mechanism to dynamically determine the sharing quantity. 
Our methodology employs temporal uncertainty estimation using \shilpa{conformal prediction inspired quantile gating mechanism} to identify feature deviations across frames, alongside an uncertainty-based attention mechanism to weigh feature importance. It dynamically selects relevant features with adaptive thresholds based on environmental needs and facilitates efficient feature exchange between agents to minimize bandwidth usage while ensuring robust fusion of multi-source data.

We instantiate the framework using semantic segmentation as an example cooperative task, and evaluate it against existing open-source cooperative segmentation models~\citep{xu2023cobevt,xu2022opv2v, li2021learning} and existing selection strategies.
To our knowledge, this is the first work on selective cooperative semantic segmentation. Unlike detection or tracking, which can work with sparse, object-level features, segmentation demands pixel-level granularity for exact shapes and boundaries, which poses more challenges for bandwidth reduction---a difficult case to address the bandwidth-information mismatch. 
%
We evaluate \ours through extensive benchmarking on the OPV2V~\citep{xu2022opv2v} and \shilpa{V2V4Real~\citep{xu2023v2v4real}} datasets,  assessing the performance and network overhead across multiple open-source cooperative \shilpa{3D detection} and segmentation models, as well as against other existing selection strategies. 
\shilpa{\ours performs comparable to baselines with a bandwidth reduction of 80.28\% (for segmentation) and 72.52\% (for 3D detection) on average.} Furthermore, we perform a wider network overhead analysis by comparing the bandwidth consumption against 10 existing baselines. \shilpa{\ours demonstrates competitive IoU performance compared to baselines for compression in controlled settings, using only 1.46\% bandwidth at 32x compression.} To understand the efficacy of the adaptation design,  we evaluate data request variations across frames with respect to environment complexity. \shilpa{Our evaluation also shows robustness to localization error and communication latency.} 



In summary, \ours makes the following contributions:
\vspace{-3mm}
\begin{itemize}[noitemsep,leftmargin=8pt]
    \item We propose a learning framework to proactively adapt feature selection by dynamically determining feature \textit{relevance} and sharing \textit{quantity}---a transformative deviation from existing static selection frameworks.
    \item We propose a novel relevance assessment strategy by quantifying temporal uncertainty using \shilpa{conformal prediction inspired quantile gating meachanism} to compare features across frames. Coupled with the conformal assessment using an adaptive quantile threshold based on conformity score, we use an attention mechanism with an adaptive mask threshold for quantity estimation.
    \item Our instantiation of \ours on cooperative segmentation is the first work that demonstrates feature selection on this task. 
    Segmentation requires transmitting large volumes of data, making selective perception challenging under bandwidth constraints. Our approach addresses this by intelligently selecting and prioritizing critical features, reducing data transmission while preserving segmentation accuracy.
    \item  We employ a training method inspired by the $\epsilon$-Greedy exploration strategy from reinforcement learning~\citep{liu2022understanding}, which balances exploration and exploitation effectively in training, resulting in lower bandwidth and higher task performance. 
    \item Our evaluation shows that \ours achieves up to 80.28\% \shilpa{and 72.52\%} bandwidth reduction \shilpa{in cooperative segmentation and detection respectively} while maintaining a comparable accuracy. Relative to other selection strategies, \ours also improves IoU by as much as 45.54\% with up to 72\% less bandwidth. Quantitative frame-by-frame inspection further validates the flexibility, demonstrating graceful adaptation to environmental dynamics and paving the way towards real-world deployment.
\end{itemize}

%% file: sections/2_related.tex
\vspace{-3mm}
\section{Related Work}
\parab{Cooperative Perception.}
Resilient operation of autonomous agents depends on their onboard perception, which is often limited by blind spots and uncertainties. A range of cooperative perception mechanisms have been proposed, from early fusion and edge assistance~\citep{zhang2021emp, harbor}  sharing raw sensor data, to late fusion of regional processing results while missing holistic scene details. The most prominent are intermediate feature fusion methods~\citep{chen2019f}, however, there remains a gap between bandwidth demand and availability. 
V2VNet~\citep{wang2020v2vnet} uses compressed LiDAR BEV features and GNN aggregation, yet often transmitting redundant data. AttFuse~\citep{xu2022opv2v} and DiscoNet~\citep{li2021learning} offer attentive fusion and collaboration graphs, but lack intelligent data selection. CoBEVT~\citep{xu2023cobevt} integrates multi-vehicle BEV features via SinBEVT and FuseBEVT for segmentation, yet struggles with inefficient feature compression. Unlike these approaches, \ours selectively transmits essential perception features, reducing bandwidth usage, improving network efficiency, and maintaining accuracy under constraints.

\parab{Network Efficient Cooperation.}
The literature on communication-efficient cooperative perception includes Compression-based, Selection-based, and Hybrid approaches. We focus on Selection-based methods due to practical wireless broadcasting needs. Where2comm~\citep{hu2022where2comm} uses threshold based spatial confidence maps for selection, ignoring uncertainties and relying on impractical multi-round transmission. UMC~\citep{wang2023umc} applies entropy-based selection but sending full maps is computationally expensive. CenterCOOP~\citep{zhou2023centercoop} transmits center-point LiDAR features via mutual information, improving bandwidth by 10\% on DAIR-V2X while sharing all embeddings. BM2CP~\citep{zhao2023bm2cp} shares modality-guided features, overlooking sensor noise. UniSense~\citep{ren2025unisense} selects critical regions via uncertainty, missing occlusion uncertainties and incurring high bandwidth costs. SwissCheese~\citep{xie2024swisscheese} uses fixed thresholds for selection, lacking adaptability to dynamic scenarios. Different from methods with fixed thresholds or static selection strategy, \ours adapts feature selection to environment dynamics, prioritizes uncertainty-driven critical features using past confident data, balancing bandwidth efficiency and perception performance resilience.

%% file: sections/3_prob.tex
\section{Methodology and Solution}
\subsection{Problem Statement}
\label{sec:problem_statement}
We address the challenge of selective feature sharing for cooperative perception tasks. Rather than designing a new model, we aim at a selection framework that is applicable to a set of similar feature-sharing mechanisms~(\cite{xu2023cobevt, hu2022where2comm, li2021learning}. Given a set of features $\mathcal{F} = \{f_1, f_2, \dots, f_n\}$ extracted from input data, where $n$ is the number of feature channels in latent representations, our objective is to identify a subset $\mathcal{S} \subseteq \mathcal{F}$ such that the number of selected features $|\mathcal{S}|$ is minimized, while the accuracy of the perception task, denoted as a function $A(\mathcal{S})$ of the selected features, is maximized. Formally, the optimization problem can be expressed as $\   \min_{\mathcal{S} \subseteq \mathcal{F}}{|\mathcal{S}|} , 
 \max_{\mathcal{S} \subseteq \mathcal{F}}{A(\mathcal{S})}$.
%
This bi-objective optimization problem is challenging due to the \textit{inherent \textbf{tension} between feature abundance and bandwidth limitations}.  
%

%% file: sections/4_design.tex
%

\subsection{\ours Design}
\label{sec:sys_overview}
To resolve that tension, we present \ours, a selective cooperative perception framework that learns to enhance representation learning via temporal uncertainty-driven feature selection for bandwidth-efficient, accurate perception in multi-agent systems. 
\ours addresses two key research questions:
%
\vspace{-2mm}
\begin{itemize}[noitemsep, leftmargin=8pt]
\item 
\textbf{Relevance.}
If the bandwidth does not permit all features to be shared, what are the most essential features that are impactful to downstream tasks? The relevance is inevitably recipient-centric, which demands the recipient vehicle to assess its field-of-view visibility, perception uncertainty, and environment dynamics. 
\item \textbf{Quantity.} 
If most relevant features are prioritized, where is the diminishing return point to stop sharing? Such quantity sweet spot may be dependent on both scene- and task-complexity, which demands dynamic runtime adaptation, a sophistication in \ours framework design. 
\end{itemize}
\vspace{-1mm}

\ours leverages two key insights in addressing the above research questions. 
i) \textit{Temporal Uncertainty as Relevance.} 
Unlike conventional approaches~\cite{ ren2025unisense, xie2024swisscheese,hu2022where2comm}, which quantify the feature relevance of each individual frame, we put the features in their temporal contexts, and quantify the temporal uncertainty (\eg{ introduced by scene dynamics, changing lighting conditions, occlusions}) as relevance to determine sharing priority.  We use \shilpa{conformal prediction inspired quantile gating method} to better assess the temporal uncertainty in continuous frames.
ii) \textit{Environment Adaptivity.} Adapting to environment dynamics, we introduce two learned parameters: a data-driven uncertainty threshold used in the uncertainty estimation, and a data-driven attention mask threshold, to adjust the runtime sharing quantity for each frame. This flexibility empowers \ours to scale gracefully while maintaining efficient bandwidth utilization.
Figure~\ref{fig:full_overview} shows an architectural overview of \ours. We describe the components in detail.

\input{fig_latex/fig_overview}

\parab{Conformal Temporal Uncertainty.}
\ours takes encoded representations from sensor data, contextualizes them within the ego agent’s recent memory, and identifies the uncertain ones as candidates for enhancement through cooperation.
Specifically, given current frame feature $F_t \in \mathbb{R}^{H' \times W' \times D}$, encoded from sensor input $X_t \in \mathbb{R}^{H \times W \times C}$, it calculates conformity scores by comparing the feature $F_t$ against the fused features from previous frames, denoted as distance function $S_t = d(F_t, F_{t-1}^{\text{fused}})$, \shilpa{where $d(.)$ is the L1 distance defined as $|F_t - F_{t-1}^{\text{fused}}|$, effectively capturing deviations between past and current scene understanding. In this context, the use of 'conformal' is inspired by conformal prediction, differing by (a) learning online frame-wise instead of using a fixed calibration dataset, and (b) estimating uncertainties in components of regression values rather than direct intervals on entire regression data. 
Then, inspired by conformal prediction, we use quantile gating to assess the features' relevance, introducing} a learnable quantile threshold $q$, and apply a cross-attention mechanism between the features and the ranges of those scores above the threshold (\ie $S_t(f) > q$) to obtain the relevance metric $R_t$.

\parab{Data-driven Quantity Cutoff.}
In order to adapt to environment dynamics, we introduce a mechanism to determine the diminishing return point of the sharing volume. Specifically, we introduce a second learnable threshold  $\tau$ such that we only share those features whose relevance score is above the threshold ($R_t > \tau$). In situations where the perception complexity is high, the range of conformity score will increase due to more temporally diverse feature encodings, which, in turn, yields higher relevance scores. More features would have relevance scores above the learned threshold; the communication of which incurs higher bandwidth usage. On the contrary, temporal consistency will result in low relevance scores among all features, which yields low sharing volume. Moreover, the temporal consistency not only comes from the low environment dynamics, but also comes from perception stability---\ours leverages this stability to save bandwidth as well.

\parab{Feature Exchange and Fusion.}
To communicate recipient-centric relevance assessment, the ego agents send requests for selected high-relevance features. Following prior work assumptions of precise pose estimation~\citep{xu2023cobevt}, responders use spatial transformation~\citep{jaderberg2015spatial} to match ego's perspective and share the requested features.
The ego agent blends received features into its own feature map before passing them along to the fusion decoders and task heads.

%% file: fig_latex/fig_overview.tex
\begin{figure}[t]
    \centering
    \begin{subfigure}[b]{0.69\textwidth}
        \centering
        \includegraphics[width=\linewidth]{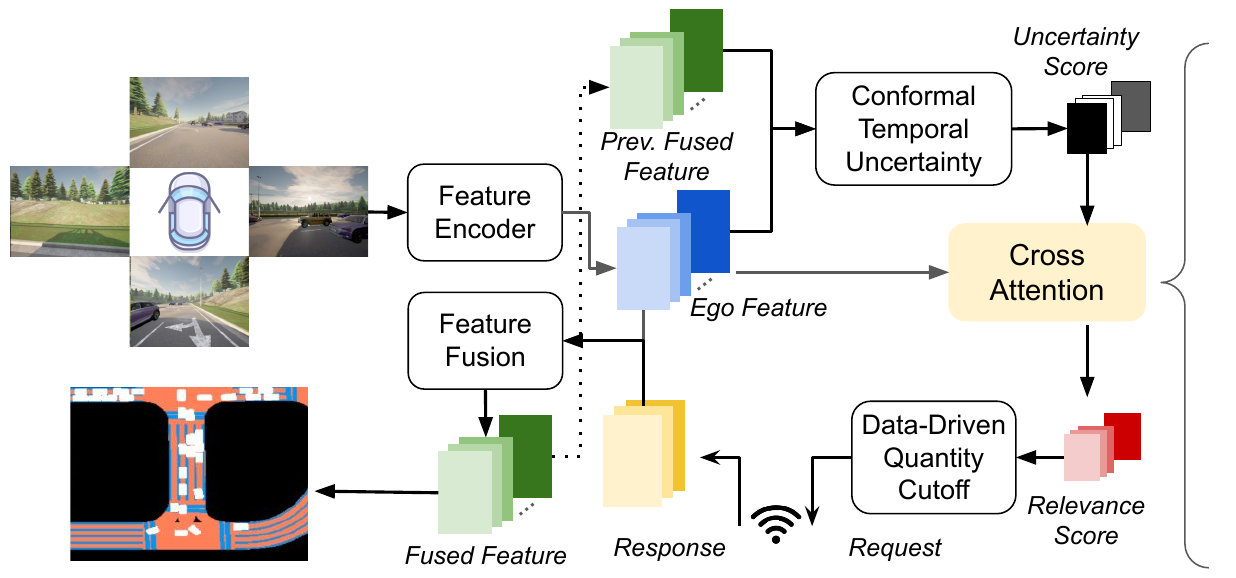}
        \caption{\ours\ Overview}
        \label{fig:overview}
    \end{subfigure}
    \hspace{-2mm}
    \begin{subfigure}[b]{0.27\textwidth}
        \centering
        \includegraphics[width=\linewidth]{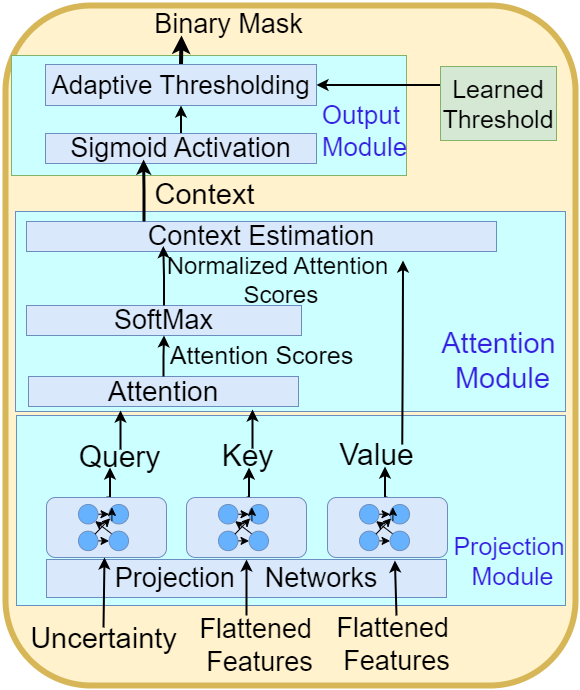}
        \caption{Cross-Attention Module}
        \label{fig:cross_attention}
    \end{subfigure}
    \vspace{-2mm}
    \caption{(a) \ours \ Overview. \ours \ conducts feature learning, followed by an uncertainty-based selection module using learned features. It estimates adaptive temporal uncertainty (via learned confidence) for each feature, performs cross-attention-based feature weighting, and selects features using a learned threshold. The ego then broadcasts a request vector for selected features, reconstructs received CAV data into full features, fuses them, and sends them to the task head for final results. (b) Cross-Attention Module uses learned projections of temporal uncertainty as queries, and feature projections as keys and values. These matrices pass through an attention module, and a learned threshold at the final output generates a binary mask for selected channels. }
    \label{fig:full_overview}
    \vspace{-3mm}
\end{figure}

%% file: sections/5_train.tex
\subsection{Training}
\label{sec:training}

\parab{Loss Function.}
To attain the joint goal of bandwidth efficient transmission and task performance, the training of \ours\ can be formulated as a constrained optimization problem defined as follows: 
%
\begin{equation}
\theta^* = \arg \min_{\theta} \, L(C(\theta)), \text{s.t. } P(C(\theta)) = C_{1.6} 
\end{equation}
 where $C_{1.6}$ represents the percentage of channels corresponding to 1.6 Mbps~\citep{mo2025see}, $C(\theta)$ represents the channels selected, $P(\cdot)$ represents the precentage of channels selected, $L(\cdot)$ represents the perception task loss.
To solve this, we use the Lagrangian formulation, defining the total loss as
\begin{equation}
\theta^* = \arg \min_{\theta} \, L(C(\theta)) + \lambda \cdot (P(C(\theta)) - C_{1.6}) 
\label{eqn:lagrange_loss}
\end{equation}
where $\text{$\lambda$}$ acts as a Lagrange multiplier dynamically adjusted (explained in Appendix~\ref{app:lagrange_adjustment}) to enforce the constraint over time. The strategy starts with unconstrained optimization for initial learning, then introduces and intensifies constraint enforcement over time, using periodic strong adjustments for major deviations and steady increments for fine-tuning.

\parab{Training Strategy.}
For training the loss function in \ours, we use an $\epsilon$-Greedy method  inspired by the reinforcement learning $\epsilon$-greedy exploration strategy~\citep{liu2022understanding}. However, we make some adaptations for our training purpose. With $\epsilon$ probability we request entire feature set and with $(1-\epsilon)$ probability we exploit the knowledge. By exploiting the knowledge, we request the data as per learned thresholds (i.e. conformal predictions's quantile threshold $q$ and  cross-attention module's mask threshold $\tau$). We see considerable improvement in bandwidth requirement with this training method. Occasional updates considering all features can stabilize the optimization trajectory by smoothing out erratic updates caused by noise from partial features. This can prevent the model from diverging or oscillating around suboptimal points, leading to smoother convergence to a better solution. The $\epsilon$-Greedy method balances exploration (full data $D_{\text{full}}$ with probability $\epsilon$) and exploitation (partial data $D_{\text{partial}}$ with probability $(1 - \epsilon)$). This balance stabilizes optimization by reducing gradient noise through periodic full-data updates. As formalized in Proposition~\ref{thm:epsilon_greedy}, this strategy reduces both the bias and variance of the gradient estimator compared to using only partial data. The expected gradient
\begin{equation}
\mathbb{E}[\nabla \text{L}] = \epsilon \cdot \mathbb{E}[\nabla \text{L}(D_{\text{full}})] + (1 - \epsilon) \cdot \mathbb{E}[\nabla \text{L}(D_{\text{partial}})]
\label{eqn:total_loss}
\end{equation}
ensures smoother convergence while adhering to bandwidth constraints. We formalize the effectiveness of the $\epsilon$-Greedy training strategy in the following proposition. 

\parab{Theoretical Analysis.}
We draw a theoretical analysis based on the following assumptions.
(1) Perfect transmission conditions (no noise introduced during transmission, so any bias or variance arises solely from the data used),
(2) A complete dataset $D_{\text{full}}$ and a subset $D_{\text{partial}} \subseteq D_{\text{full}}$ representing partial data,
(3) The true gradient is $\nabla \text{L}(D_{\text{full}})$, reflecting the loss over the entire dataset.
%
\begin{proposition}[Effectiveness of $\epsilon$-Greedy Training]
\label{thm:epsilon_greedy}
An $\epsilon$-greedy training strategy that computes the gradient of Loss $\text{L}$ using full data ($D_{\text{full}}$) with probability $\epsilon$ and partial data ($D_{\text{partial}}$) with probability $(1 - \epsilon)$ reduces the bias of the gradient estimator compared to using only partial data. Specifically, the bias is scaled down by a factor of $(1 - \epsilon).$
\end{proposition}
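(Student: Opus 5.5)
The plan is to model the $\epsilon$-greedy estimator as a two-component mixture and compute its bias directly by linearity of expectation. Let $B$ be a Bernoulli$(\epsilon)$ variable, drawn independently of the data sampling. Define the estimator
\[
\hat g = B\,\nabla \text{L}(D_{\text{full}}) + (1-B)\,\nabla \text{L}(D_{\text{partial}}).
\]
Under assumption (3), the target is $g^\ast = \nabla \text{L}(D_{\text{full}})$. Under assumption (1), there is no transmission noise, so $\mathbb{E}[\nabla \text{L}(D_{\text{full}})] = g^\ast$. The bias of the partial-data-only estimator is
\[
b_{\text{partial}} = \mathbb{E}[\nabla \text{L}(D_{\text{partial}})] - g^\ast,
\]
which is generally nonzero because $D_{\text{partial}} \subsetneq D_{\text{full}}$ by assumption (2).

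The key step is to take the expectation of $\hat g$, conditioning first on $B$ and using its independence from the data. This recovers equation~\ref{eqn:total_loss}:
\[
\mathbb{E}[\hat g] = \epsilon\, g^\ast + (1-\epsilon)\,\mathbb{E}[\nabla \text{L}(D_{\text{partial}})].
\]
Subtracting $g^\ast$ from both sides gives
\[
\mathbb{E}[\hat g] - g^\ast = (1-\epsilon)\, b_{\text{partial}}.
\]
So the bias vector is the partial-data bias scaled by exactly $(1-\epsilon)$. Its norm satisfies $\|\mathbb{E}[\hat g]-g^\ast\| = (1-\epsilon)\|b_{\text{partial}}\|$, which is strictly smaller whenever $\epsilon>0$ and $b_{\text{partial}}\neq 0$. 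This is precisely the claim of Proposition~\ref{thm:epsilon_greedy}.

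The main obstacle is not the algebra but justifying the modeling step. Specifically, I must check that the mask choice (full versus partial) is independent of the minibatch randomness, and that "partial data" refers to a fixed selection rule, so that $\mathbb{E}[\nabla \text{L}(D_{\text{partial}})]$ is well defined. In training, the partial mask depends on the current parameters $\theta$. I would handle this by stating the result conditionally on $\theta$ at a given iteration. At a fixed iterate the mask is a deterministic function of the data, the $\epsilon$-coin is drawn independently, and the tower property makes the conditioning argument valid.

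If a variance remark is desired, as the surrounding text suggests, I would add one via the law of total variance. The variance of $\hat g$ equals $\epsilon V_{\text{full}} + (1-\epsilon) V_{\text{partial}}$ plus the between-component term $\epsilon(1-\epsilon)\|b_{\text{partial}}\|^2$. This yields a reduction relative to $V_{\text{partial}}$ only under an extra condition, namely $V_{\text{full}}$ sufficiently smaller than $V_{\text{partial}}$. I would therefore state that part as a conditional remark rather than as part of the proposition.
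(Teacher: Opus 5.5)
Your proposal is correct and follows essentially the same route as the paper. Both write the expectation of the $\epsilon$-greedy estimator as the $\epsilon$/$(1-\epsilon)$ mixture $\epsilon\,\mathbb{E}[\nabla \text{L}(D_{\text{full}})] + (1-\epsilon)\,\mathbb{E}[\nabla \text{L}(D_{\text{partial}})]$, then subtract $\mathbb{E}[\nabla \text{L}(D_{\text{full}})]$ to get $(1-\epsilon)$ times the partial-data bias. Your extra care is sound and slightly sharpens the paper's version: you require $b_{\text{partial}}\neq 0$ for the strict inequality, and you treat the variance reduction claimed in the main text as conditional rather than automatic.
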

\begin{proof}
    Detailed proof provided in Appendix~\ref{app:proof_for_EG_training}.
\end{proof}

%% file: sections/6_eval.tex
\section{Experiments}
\label{sec:experiments}
We conduct six experiments to demonstrate \ours's efficiency in task performance and bandwidth usage. Referred to as \ours, we apply \ours\ to CoBEVT~\citep{xu2023cobevt} 
for evaluation, focusing on semantic segmentation on the OPV2V~\citep{xu2022opv2v} dataset \shilpa{and 3D detection on OPV2V and V2V4Real~\citep{xu2023v2v4real} datasets}. We choose CoBEVT for its robust performance in cooperative perception compared to existing works. \ours\ is the first to address selective cooperative perception in semantic segmentation.

\input{fig_latex/fig_eval_nonselection_method}
\parab{``Trimming'' Existing Cooperative Perception Baselines.}
In Figure~\ref{fig:application_to_methods}, we assess performance and bandwidth (BW) by applying our temporal uncertainty-aware method \ours\ to three methods: CoBEVT~\citep{xu2023cobevt}, Attfuse~\citep{xu2022opv2v}, and DiscoNet~\citep{li2021learning}. We implement \ours\ on these methods after removing any existing compression techniques. This evaluation compares performance accuracy between \ours-based selection and the original methods (without compression/selection), and reports bandwidth usage percentages relative to the originals (40 Mbps). 
For Dynamic and Static segmentation combined, \ours-CoBEVT, \ours-Attfuse, and \ours-DiscoNet use average bandwidths of 27.9\%, 21.07\%, and 10.18\%, respectively, corresponding to 11.16 Mbps, 8.4 Mbps, and 4.07 Mbps based on our 128x32x32 latent representation size. 
We observe that \ours\ achieves comparable performance accuracies to the original methods despite significantly lower bandwidth consumption, with an average 80.28\% improvement in network overhead over the baselines. Detailed values are in Appendix Table~\ref{tab:application_to_methods}.

\shilpa{Additionally, we validate \ours's generalizability on 3D detection using LiDAR data on OPV2V~\citep{xu2022opv2v} and V2V4Real~\citep{xu2023v2v4real} datasets. Figure~\ref{fig:ne_detection} shows \ours's performance versus baseline CoBEVT~\citep{xu2023cobevt} at IoU 0.5 and 0.7. \ours\ maintains comparable detection accuracy with 27.48\% bandwidth usage (72.52\% reduction), highlighting its efficiency in lowering network overhead.}

\parab{Selection Strategy Comparison.}
Table~\ref{tab:baseline_comparison} presents the evaluation of \ours\ against selection-based, network-efficient baselines. Our feature latent representation is sized at 128x32x32, with percentages and bandwidths reported accordingly. 
We implement two algorithms: Where2Comm~\citep{hu2022where2comm} (feature and agent selection, adapted for segmentation with a 0.4 threshold on batch confidence map) and SwissCheese~\citep{xie2024swisscheese} (feature selection over spatial and channel features, implemented in our camera-based segmentation framework as the official code is unavailable).
For SwissCheese, we set bandwidth at 10 Mbps (comparable to \ours, equating to 25\% of our latent representation size) to assess segmentation performance. \ours\ achieves lower bandwidth usage and better performance than Where2Comm, and outperforms SwissCheese in dynamic and static accuracy at similar bandwidth levels.

\input{table_latex/tab_eval_baseline_compare}

\parab{Post-selection Compression Compatibility.}
\label{sec:compression_based_methods}
\shilpa{Compression-based methods like CoBEVT~\citep{xu2023cobevt} and AttFuse~\citep{xu2022opv2v} reduce data size for network efficiency. Yet, \ours's data selection enhances efficiency by compressing post-selection, optimizing bandwidth. We apply lossy quantization to reduce precision and volume, then lossless methods for further reduction.
Figure~\ref{fig:compression_inference} compares IoU and bandwidth for \ours, CoBEVT~\citep{xu2023cobevt}, and AttFuse~\citep{xu2022opv2v} at 1x, 8x, and 32x rates for the segmentation task. \ours\ excels in IoU and bandwidth savings, notably at 32x (1.46\% vs. 3.76\% for CoBEVT and 10.62\% for AttFuse in 32x Dyn) [See Appendix~\ref{app:compression_experiment} for detailed results].}

\input{fig_latex/fig_compression}


\parab{Environment Adaptation.}
Figure~\ref{fig:Inference} shows the inference results for \ours\ over multiple frames. We see variations in bandwidth implying variable feature request in each frame.
The figure shows increased data requests correlate with higher scene complexity. In segmenting the dynamic objects, an increase in complexity can be referred to criticality in positioning of an increased amount of vehicles and traffic participants. Similarly, an increase in complexity in the static task can be referred to an increase in the number of intersections or lane orientations in the scene. Figure~\ref{fig:Inference} shows that in both cases, \ie increased amount of vehicles, and increased topological complexity in the roadways, \ours\ accordingly shares more data to preserve task performance, whereas in other cases, gracefully adapts to request less sharing, saving the precious bandwidth to other agents in need. \shilpa{ In critical frame ranges such as 1000-1800 (Dynamic), 1000-1500 (Lane), and 1200-1400 (Road), where baseline CoBEVT~\citep{xu2023cobevt} consistently underperforms, \ours\ achieves much higher IoU, illustrating the effectiveness of its adaptive threshold masking mechanism in prioritizing key features and managing scene complexity (see Appendix~\ref{app:intuition}).}

\input{fig_latex/fig_eval_inference_rebuttal}

\input{table_latex/tab_eval_ablation}

\parab{Training Methods Comparison.}
Table~\ref{tab:ablation_study}
shows the comparative study for training \ours\ using different training methods, all using the loss function in Section~\ref{sec:training}. \textbf{(1)} \ours\ is trained with $\epsilon$-Greedy (EG) based fine-tuning (FT) using Conformal Prediction (CP) for temporal uncertainty estimation [Section~\ref{sec:training}], termed $EG+CP+FT$.
\textbf{(2)} The first method uses Standard Deviation (SD) for spatial uncertainty estimation over latent feature channels ($EG+SD+FT$), showing higher network consumption for dynamic objects (35.64\% vs. \ours\ at 32.04\%) due to lack of temporal consideration in SD, and 6\% lower performance in static road segmentation.
\textbf{(3)} The second method employs a basic curriculum for fine-tuning ($Curriculum+CP+FT$) with four stages: (a) basic fine-tuning, (b) cross-attention with fixed mask threshold and CP confidence, (c) adaptive mask threshold with fixed CP confidence, and (d) fully adaptive (learned mask and CP confidence) CP-guided selection. This yields good performance but higher bandwidth use (49\% dynamic, 51\% static). 
\textbf{(4)} The final method omits EG in FT ($CP+FT$), resulting in lower accuracy (static road 8\% below \ours). EG's occasional full-feature updates stabilize optimization by promoting smoother convergence. 

%

%
%
\input{fig_latex/fig_eval_qual_analysis}
\parab{Qualitative Results.}
Figure~\ref{fig:qual_analysis} shows the segmentation results of \ours\ to analyse visually. \ours\ is different from CoBEVT~\citep{xu2023cobevt} only in its feature set before fusion of muti-source data.
\ours\ performs better than CoBEVT in estimating road segment, dynamic object segment, and lane segment. This is attributed to the better feature representation achieved by \ours\ selection mechanism, which reduces uncertainty in feature, by focusing on selected data transmission rather than transmitting the entire dataset for fusion.

\input{fig_latex/fig_localization_inference}

%
%
\parab{Sensitivity Analysis.}
%
\shilpa{\ours's performance is tested across localization errors (0cm, $\pm$20cm, $\pm$1m) during inference, as shown in Figure~\ref{fig:localization_inference}. \ours\ remains robust to small errors (up to $\pm$20cm) and maintains comparable IoU at $\pm$1m, with stable bandwidth usage, indicating consistent communication efficiency.}
\shilpa{\ours's performance under latency (0ms to 200ms) is shown in Figure~\ref{fig:latency_inference}. It exhibits strong robustness up to 50ms with no loss, and marginal IoU drop at 100ms and 200ms. This resilience stems from \ours's adaptive mechanisms, two-threshold policy, and uncertainty-driven requests, reducing reliance on misaligned data from CAVs. By requesting less data (23-32\%, see details in Appendix Table~\ref{tab:application_to_methods}), latency impact is minimized, enhancing real-world robustness. Network usage stayed consistent during testing.}

\input{fig_latex/fig_eval_network_overhead}

\parab{Network Overhead Comparison.}
We compare the network overhead of \ours\ with used bandwidths of a broader range of existing cooperative perception works. We evaluate Feature Selection (FS) algorithms UniSense~\citep{ren2025unisense}, SwissCheese~\citep{xie2024swisscheese}, Compression-based (C) algorithms STAMP~\citep{gaostamp}, CoBEVT~\citep{xu2023cobevt}, V2X-ViT~\citep{xu2022v2x}, V2VNet~\citep{wang2020v2vnet}, FCooper~\citep{chen2019f}, AttFuse~\citep{xu2022opv2v}, DiscoNet~\citep{li2021learning}, and Agent+Feature Selection (AS+FS) method Where2Comm~\citep{hu2022where2comm}.

Bandwidth comparisons use non-compressed versions (post-selection if applicable) for fairness, reporting original bandwidths from papers or calculating based on our 128x32x32 latent representation size when unspecified. Figure~\ref{fig:bandwidth_comparison} show \ours\ has the lowest network overhead (11.6Mbps) compared to baselines.

\parab{System Overhead.}
\shilpa{We compare the system overhead as measured by the processing speed in Frames Per Second (FPS) 
under static and dynamic settings in the Table~\ref{tab:system_overhead}. 
The results indicate a modest FPS reduction in \ours\ compared to the baseline, with a decrease of approximately 2 FPS in both configurations. This reduction is due to the additional computational steps in our method. The trade-off is reasonable given the significant bandwidth reduction achieved. 
}
%

\input{table_latex/tab_system_overhead}

%% file: fig_latex/fig_eval_nonselection_method.tex
\begin{figure}
    \centering
     \begin{subfigure}[b]{\textwidth}
        \centering
        \includegraphics[width=0.25\textwidth]{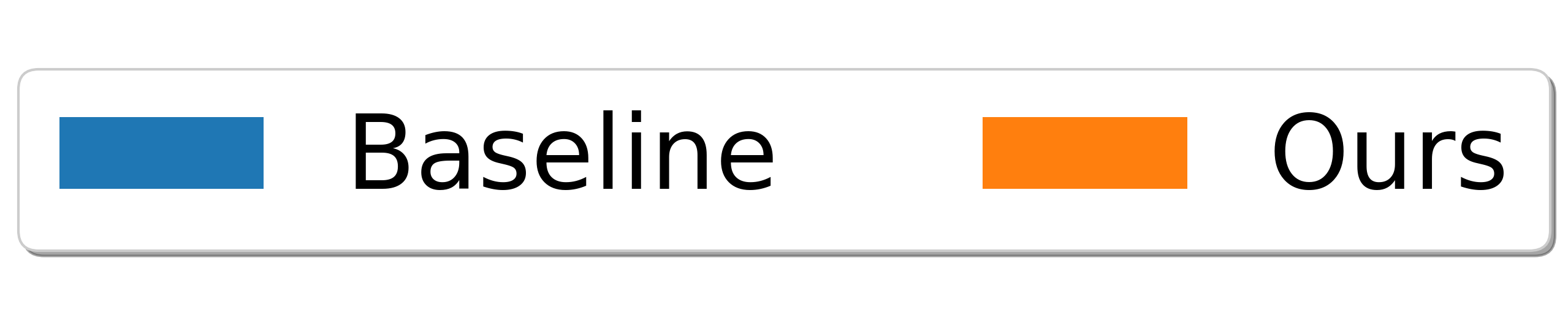}
        \label{fig:legends_ne}
    \end{subfigure}
    \begin{subfigure}[b]{0.25\textwidth}
        \centering
        \includegraphics[width=\textwidth]{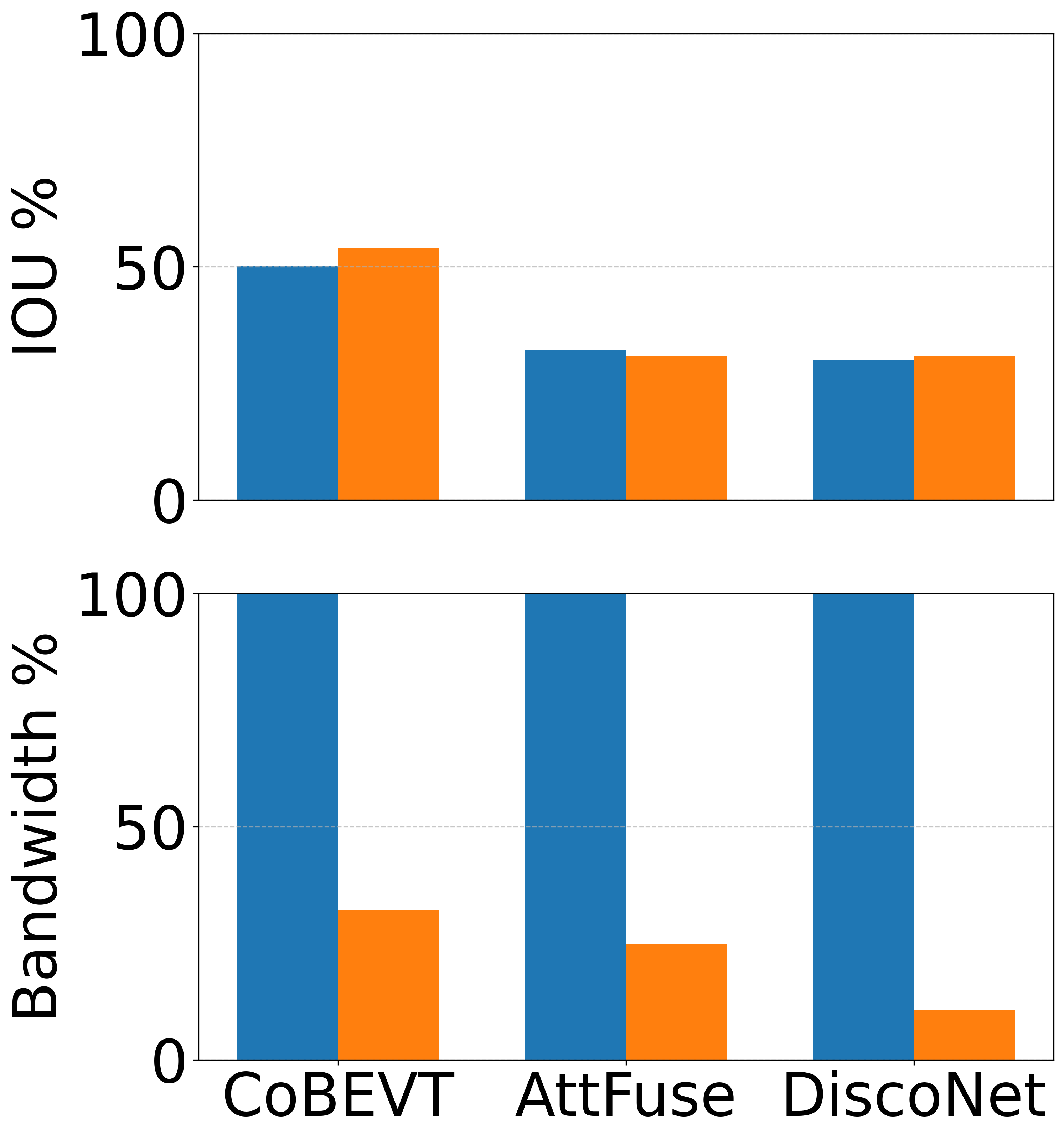}
        \caption{Dynamic}
        \label{fig:dynamic_ne}
    \end{subfigure}
    \begin{subfigure}[b]{0.25\textwidth}
        \centering
        \includegraphics[width=\textwidth]{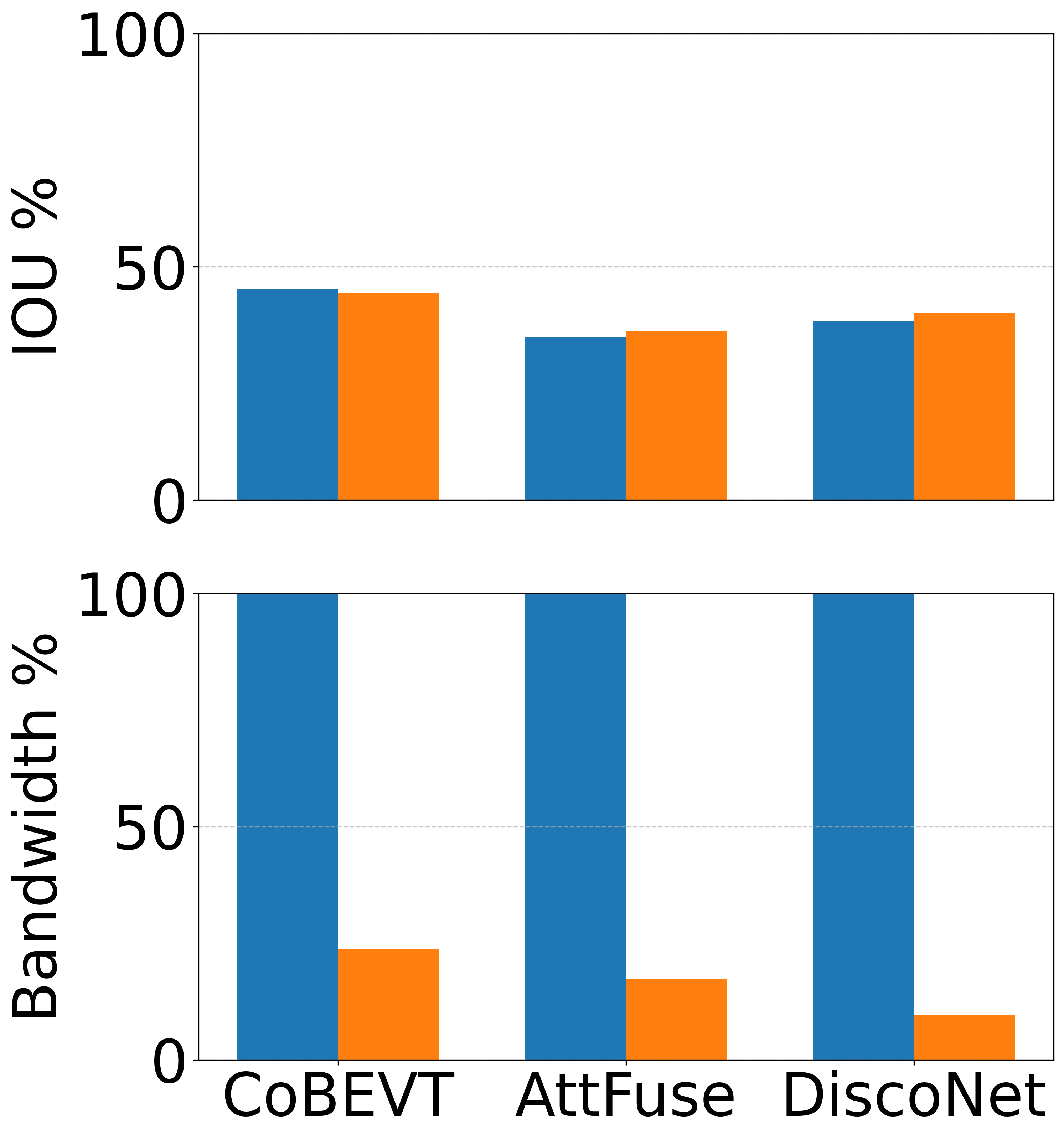}
        \caption{Static: Road}
        \label{fig:bandwidth_compare_ne}
    \end{subfigure}
    \begin{subfigure}[b]{0.25\textwidth}
        \centering
        \includegraphics[width=\textwidth]{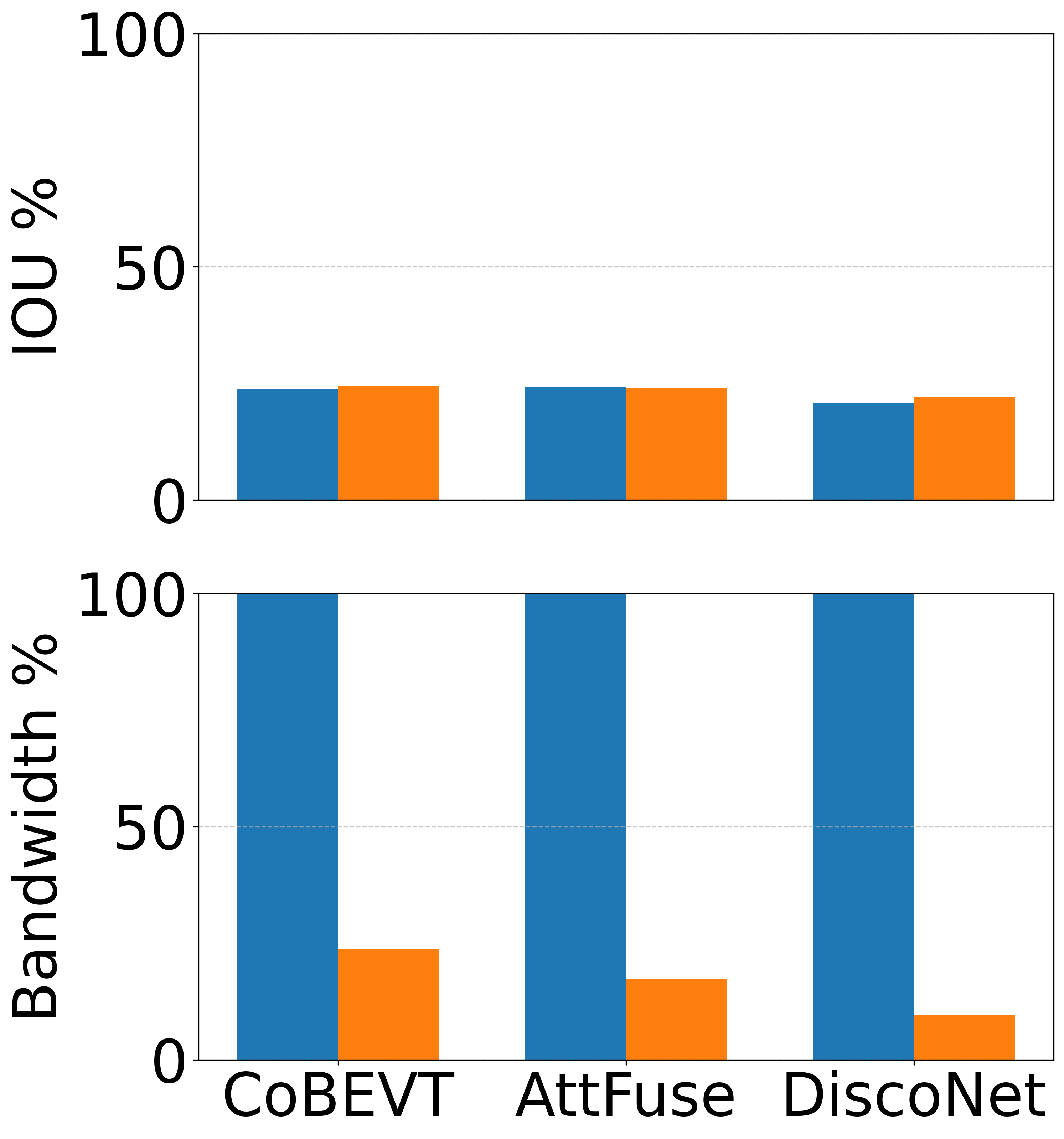}
        \caption{Static: Lane}
        \label{fig:ne_static_lane}
    \end{subfigure}
     \begin{subfigure}[b]{0.2\textwidth}
        \centering
        \includegraphics[width=\textwidth]{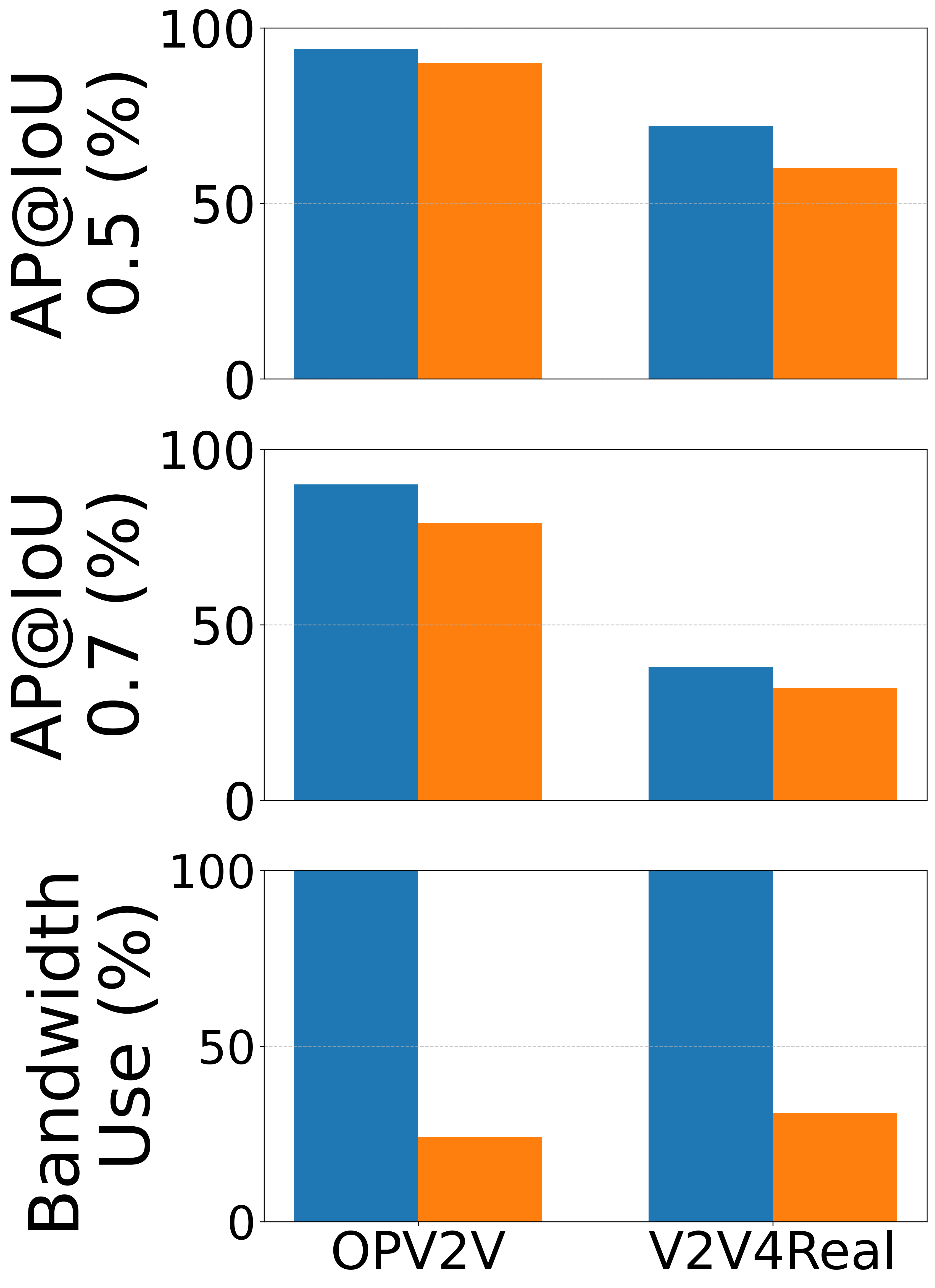}
        \caption{3D Detection}
        \label{fig:ne_detection}
    \end{subfigure}
    \caption{``Trimming'' existing cooperative \shilpa{perception} baselines, \ours reduces bandwidth significantly while preserving accuracy in \shilpa{segmentation (across different semantics, \ie dynamic, static road, and static lane) and 3D detection tasks}. }
    \label{fig:application_to_methods}
    \vspace{-5mm}
\end{figure}
%

%% file: table_latex/tab_eval_baseline_compare.tex
\begin{table}[!bhtp]
\centering
\small
\caption{Feature Selection Strategy Comparison: Accuracy and Bandwidth}
\vspace{-2mm}
\label{tab:baseline_comparison}
\label{tab:application_to_methods}
\begin{tabular}{r || c c c | c}
& \multicolumn{3}{c|}{Accuracy (IoU, \%)} & Bandwidth \\ 

Baselines  & Dynamic & Static Lane & Static Road & (Mbps) \\ 
\hline
\hline
Where2Comm~\citep{hu2022where2comm}                        & 8.62           & 20.40             & 36.46  & 39.6 \\ 
SwissCheese~\citep{xie2024swisscheese}   & 35.71   & 12.81 & 32.07  & 10\\ 
\hline
\textbf{\ours (ours)}                    & \textbf{54.03}           & \textbf{24.45}             & \textbf{44.38} & 11.16\\ 
\end{tabular}
\vspace{-3mm}
\end{table}


%% file: fig_latex/fig_compression.tex
\begin{figure}[!bhtp]
    \centering
    
    \begin{subfigure}[b]{0.25\linewidth}
        \centering
        \includegraphics[width=0.7\linewidth]{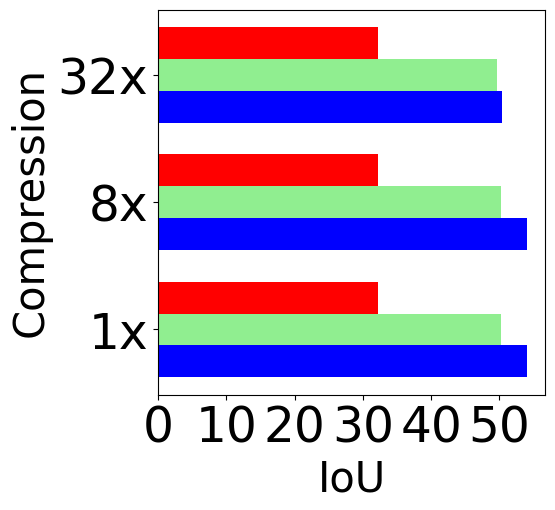}
        \vspace{-3mm}
        \caption*{Dynamic}
    \end{subfigure}
    \hspace{-10mm}
    \begin{subfigure}[b]{0.25\linewidth}
        \centering
        \includegraphics[width=0.7\linewidth]{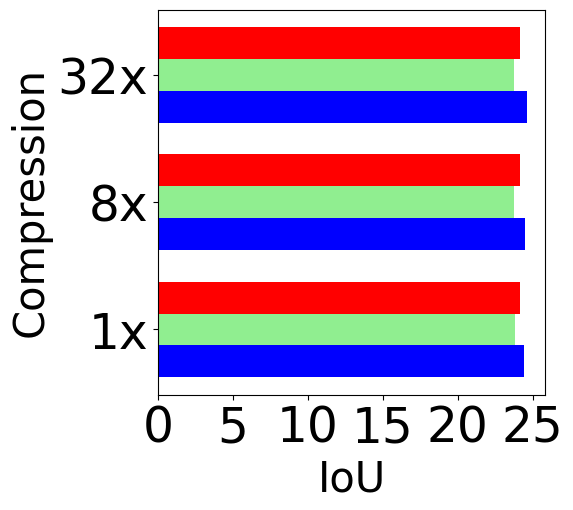}
        \vspace{-3mm}
        \caption*{Lane}
    \end{subfigure}
    \hspace{-10mm}
    \begin{subfigure}[b]{0.25\linewidth}
        \centering
        \includegraphics[width=0.7\linewidth]{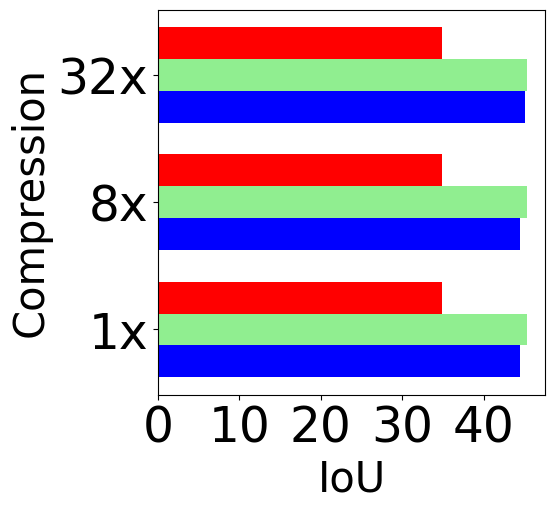}
        \vspace{-3mm}
        \caption*{Road}
    \end{subfigure}
    \hspace{-5mm}
    \begin{subfigure}[b]{0.18\linewidth}
        \centering
        \includegraphics[width=\linewidth]{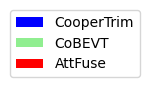}
        \vspace{3mm}
    \end{subfigure}
   
    \caption{\shilpa{Comparison of IoU performance and bandwidth usage at compression rates (1x, 8x, 32x) for \ours, CoBEVT, and AttFuse in Dynamic, Road, and Lane scenarios. 
    }}
    \label{fig:compression_inference}
\vspace{-3mm}
\end{figure}
%

%% file: fig_latex/fig_eval_inference_rebuttal.tex
\begin{figure}
        \centering
        %
        \begin{subfigure}[b]{0.4\textwidth}
            \centering
            \vspace{-3mm}
            \includegraphics[width=0.8\textwidth]{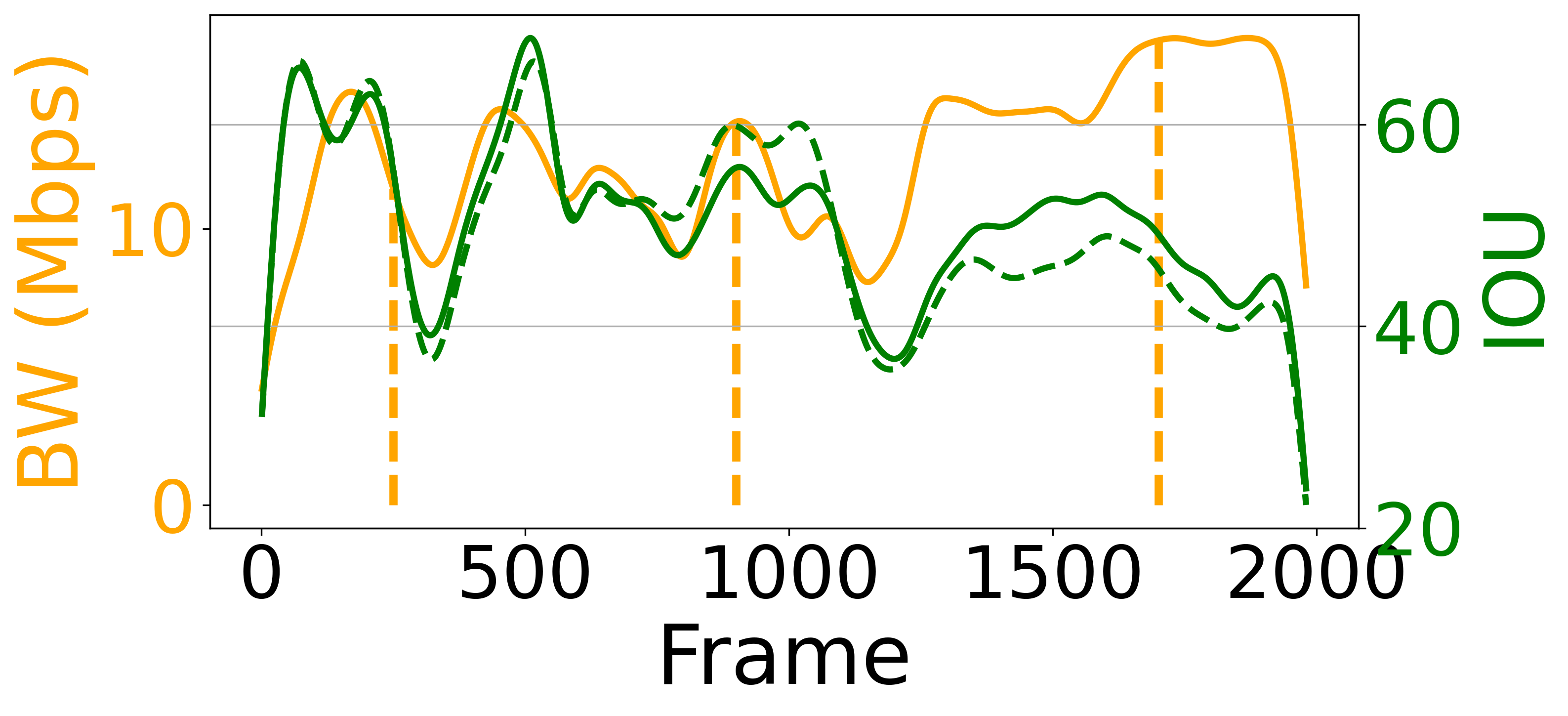}\\
            \vspace{-2mm}
            {\small Dynamic}
            \label{fig:inf_static_road}
        \end{subfigure}
        \begin{subfigure}[b]{0.18\textwidth}
            \centering
            \includegraphics[width=0.85\textwidth]{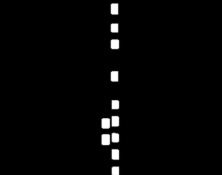}\\{\small Frame~1200}
            \label{fig:frame_1200}
            \vspace{0.3cm}
        \end{subfigure}
        \begin{subfigure}[b]{0.18\textwidth}
            \centering
            \includegraphics[width=0.83\textwidth]{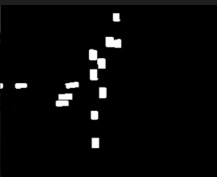}\\{\small Frame~200}
            \label{fig:frame_200}
            \vspace{0.3cm}
        \end{subfigure}
        \begin{subfigure}[b]{0.18\textwidth}
            \centering
            \includegraphics[width=0.9\textwidth]{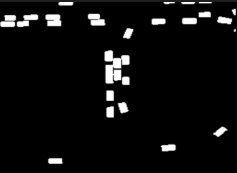}\\{\small Frame~1700}
            \label{fig:frame_1700_a}
            \vspace{0.3cm}
        \end{subfigure}
        
        
        %
         \begin{subfigure}[b]{0.6\textwidth}
            \centering
            \begin{subfigure}[b]{0.48\textwidth}
            \centering
            \includegraphics[width=\textwidth]{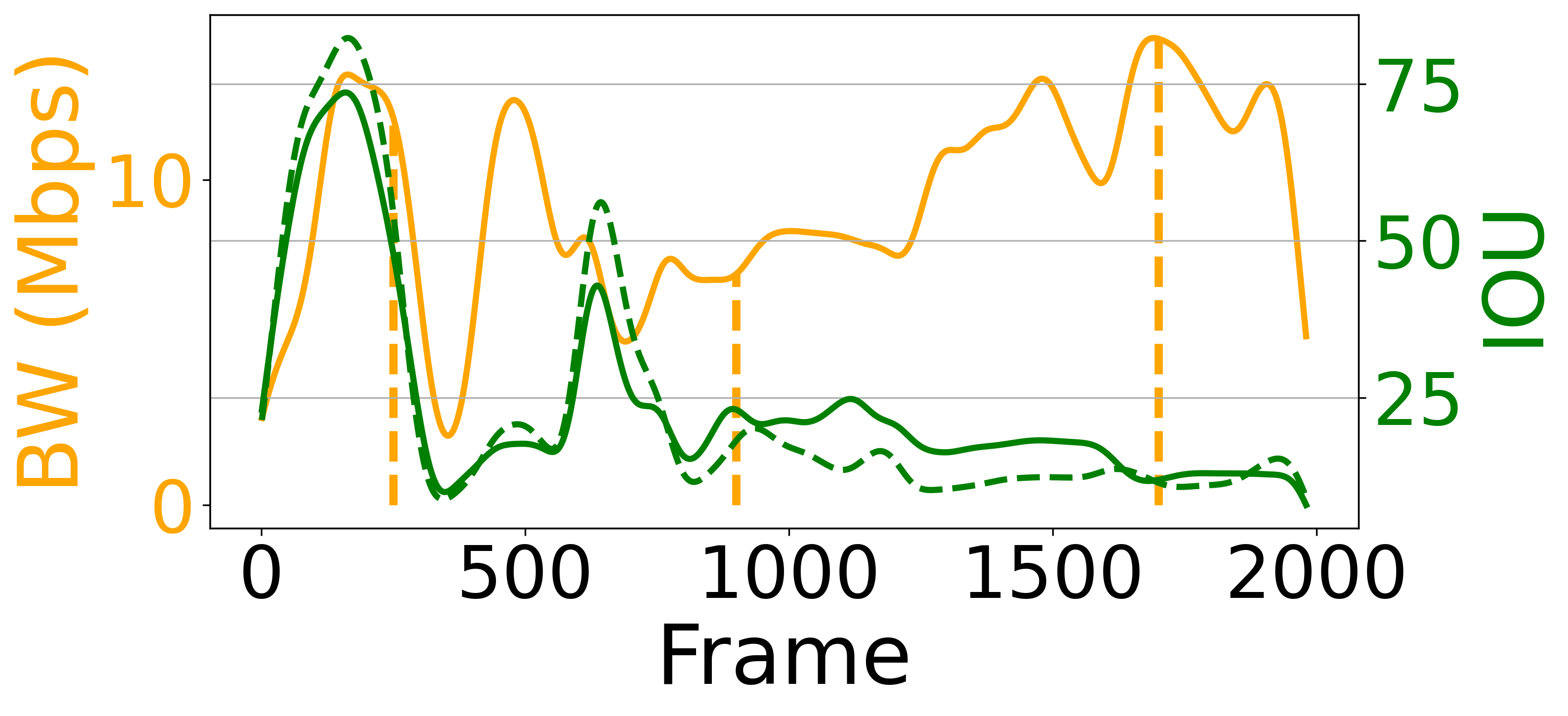}\\
            \vspace{-2mm}
            {\small Lane}
            \vspace{4mm}\label{fig:inf_static_road}
        \end{subfigure}
         \begin{subfigure}[b]{0.48\textwidth}
            \centering
            \includegraphics[width=\textwidth]{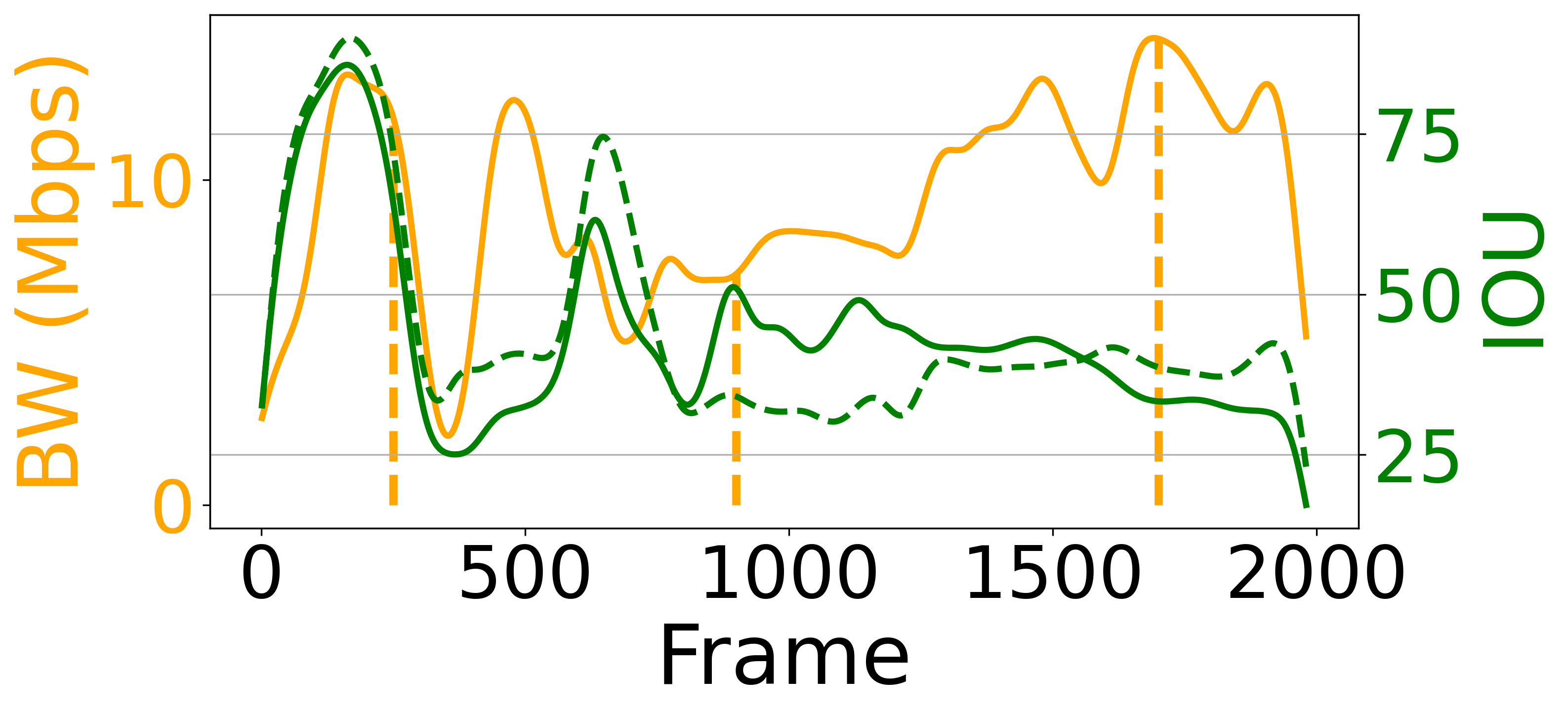}\\
            \vspace{-2mm}
            {\small Road}
            \vspace{4mm}\label{fig:inf_static_road}
        \end{subfigure}
            \vspace{-2mm}
        \end{subfigure}
        \begin{subfigure}[b]{0.12\textwidth}
            \centering
            \vspace{-42mm}
            \includegraphics[width=0.85\textwidth]{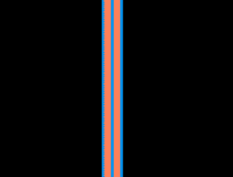}\\{\small Frame~900}\\
            {\small Low BW}
            \label{fig:frame_900}
            \vspace{0.3cm}
        \end{subfigure}
        \begin{subfigure}[b]{0.12\textwidth}
            \centering
            \includegraphics[width=0.84\textwidth]{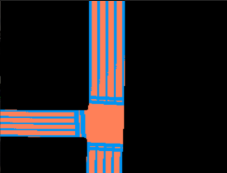}\\{\small Frame~250}\\
            {\small Medium BW}
            \label{fig:frame_250}
            \vspace{0.3cm}
        \end{subfigure}
        \begin{subfigure}[b]{0.12\textwidth}
            \centering
            \includegraphics[width=0.88\textwidth]{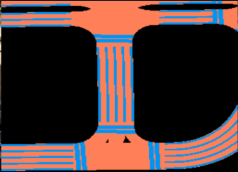}\\{\small Frame~1700}\\
            {\small High BW}
            \label{fig:frame_1700}
            \vspace{0.3cm}
        \end{subfigure}
        
        
        \vspace{-3mm}
        \caption{Increased data requests align with higher scene complexity. For dynamic objects, complexity in number and positioning rises in Frames 1200, 200, and 1700. For static elements, complexity grows in Frames 900, 250, and 1600, with more intersections and lane orientations. \shilpa{The right part highlights the frames at vertical lines; Frame 1200, 200, 1700 (dynamic) and 900, 250, 1600 (static). Green dashed lines: baseline CoBEVT IoU. Green solid lines: \ours IoU.}}

        \label{fig:Inference}
        \vspace{-7mm}
\end{figure}

%% file: table_latex/tab_eval_ablation.tex
\begin{table}[!tbh]
\centering
\small
\vspace{-1mm}
\caption{Training Methods Comparison for Uncertainty Aware Training \shilpa{showing \ours (EG+CP+FT) striking a good balance between Accuracy and Bandwidth}}
\label{tab:ablation_study}
\vspace{-2mm}
\begin{tabular}{r || c c c | c c}
& \multicolumn{3}{c|}{Accuracy (IoU, \%)} & \multicolumn{2}{c}{Bandwidth (\%) } \\ 
Training Methods & Dynamic & Static Lane & Static Road & Dynamic & Static \\ 
\hline
\hline
CP+FT & 51.57 & 23.05 & 36.92 & \textbf{30.75} & \textbf{1.12} \\ 
Curriculum+CP+FT & 52.47 & \textbf{28.27} & \textbf{45.44} & 49.25 & 51.63 \\ 
EG+SD+FT & 52.55 & 23.69 & 38.45 & 35.64 & 11.03 \\ 
EG+CP+FT (\textbf{Ours}) & \textbf{54.03} & 24.45 & 44.38 & 32.04 & 23.77 \\ 
\end{tabular}
\end{table}

%% file: fig_latex/fig_eval_qual_analysis.tex
\begin{figure}
    \centering
    \small
    \begin{subfigure}[b]{0.16\linewidth}
        \centering
        \caption*{Ground Truth}
    \end{subfigure}
    \hfill
    \begin{subfigure}[b]{0.16\linewidth}
        \centering
        \includegraphics[width=\linewidth]{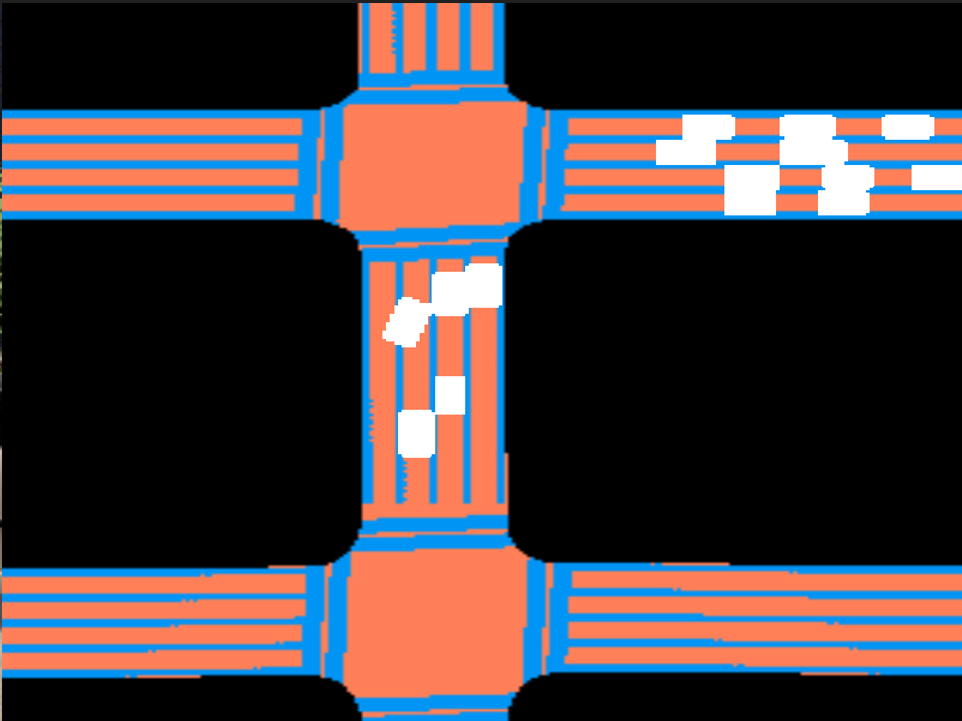}
    \end{subfigure}
    \hfill
    \begin{subfigure}[b]{0.16\linewidth}
        \centering
        \includegraphics[width=\linewidth]{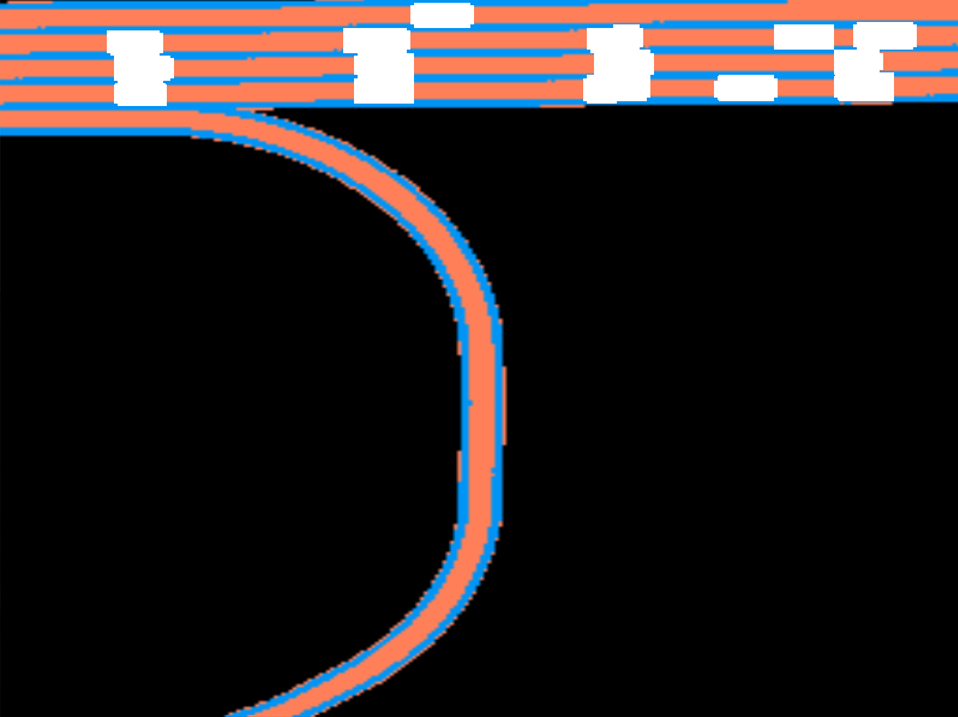}
    \end{subfigure}
    \hfill
    \begin{subfigure}[b]{0.16\linewidth}
        \centering
        \includegraphics[width=\linewidth]{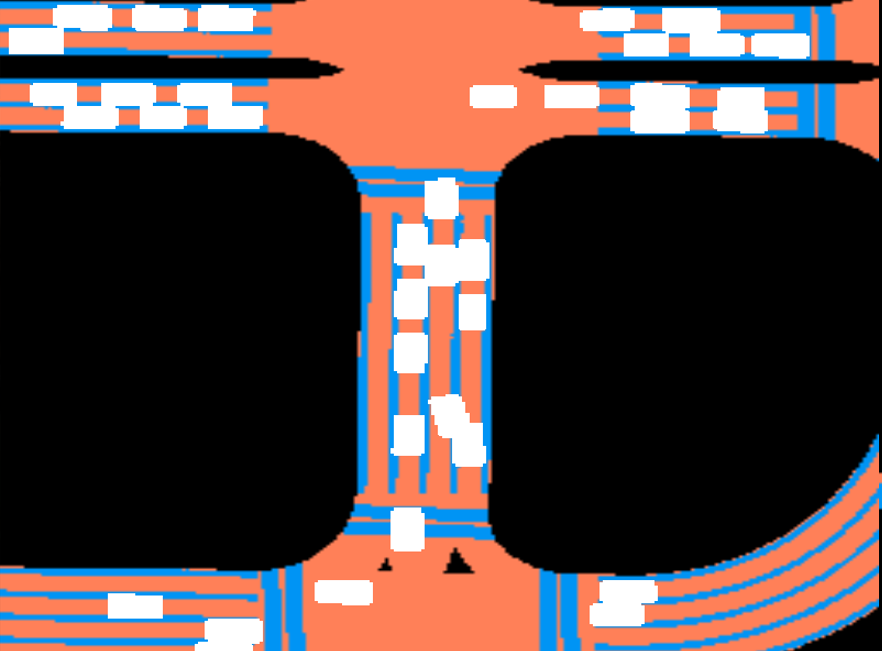}
    \end{subfigure}
    \begin{subfigure}[b]{0.16\linewidth}
        \centering
        \includegraphics[width=0.8\linewidth]{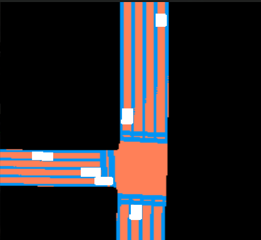}
    \end{subfigure}
    \hspace{-0.1cm}
    \begin{subfigure}[b]{0.16\linewidth}
        \centering
        \includegraphics[width=0.9\linewidth]{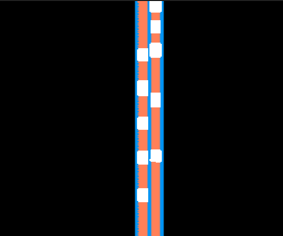}
    \end{subfigure}
    
    \begin{subfigure}[b]{0.16\linewidth}
        \centering
        \caption*{CoBEVT}
    \end{subfigure}
    \hfill
    \begin{subfigure}[b]{0.16\linewidth}
        \centering
        \includegraphics[width=\linewidth]{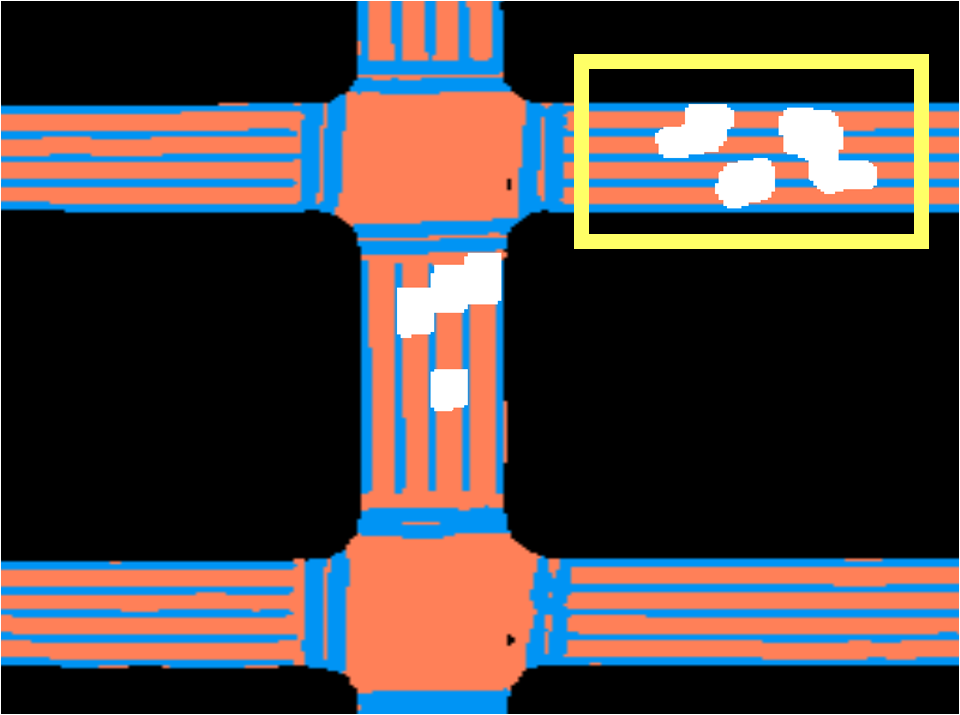}
    \end{subfigure}
    \hfill
    \begin{subfigure}[b]{0.16\linewidth}
        \centering
        \includegraphics[width=\linewidth]{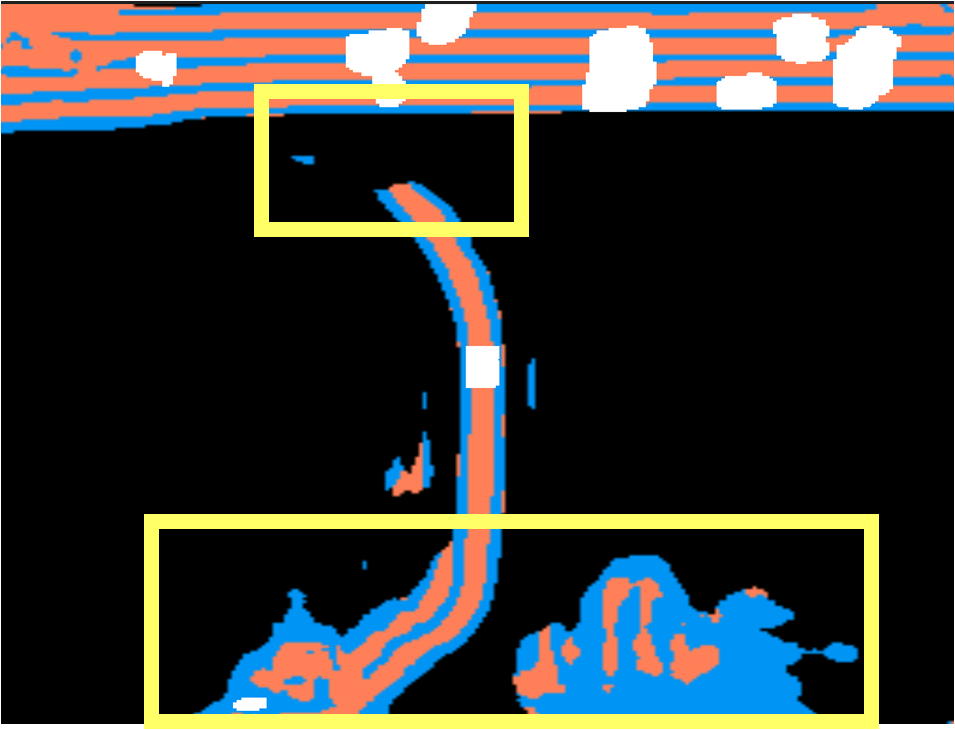}
    \end{subfigure}
    \hfill
    \begin{subfigure}[b]{0.16\linewidth}
        \centering
        \includegraphics[width=\linewidth]{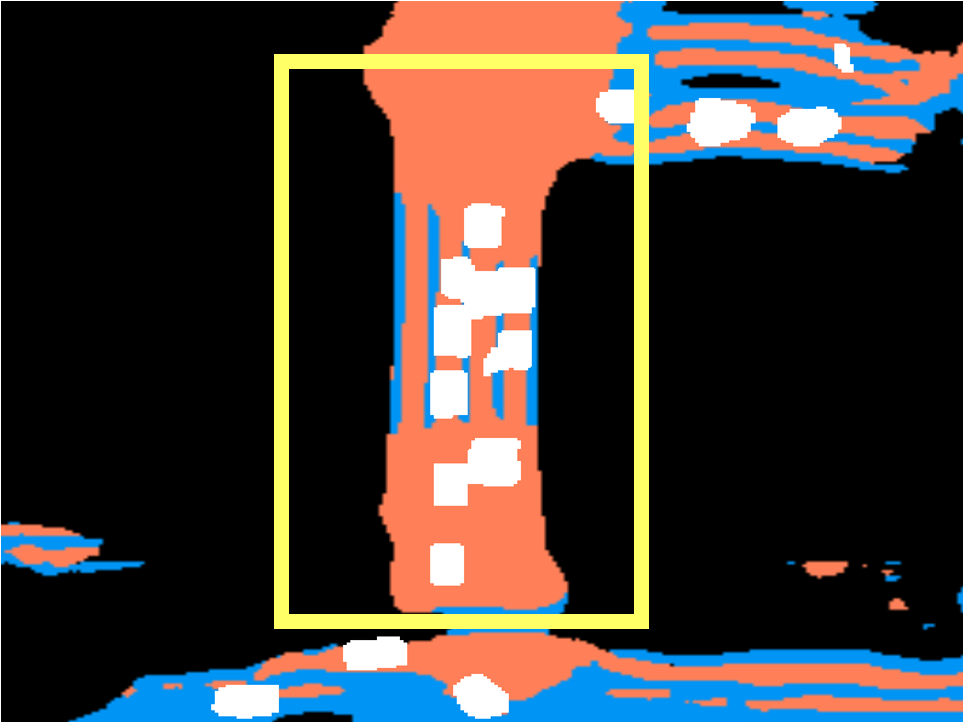}
    \end{subfigure}
    \begin{subfigure}[b]{0.16\linewidth}
        \centering
        \includegraphics[width=0.8\linewidth]{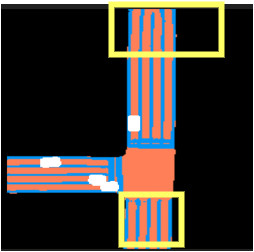}
    \end{subfigure}
    \begin{subfigure}[b]{0.16\linewidth}
        \centering
        \includegraphics[width=0.9\linewidth]{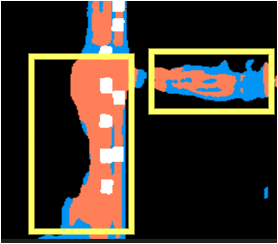}
    \end{subfigure}
    \hspace{-0.2cm}
    \begin{subfigure}[b]{0.16\linewidth}
        \centering
        \caption*{\ours}
    \end{subfigure}
    \hfill
    \begin{subfigure}[b]{0.16\linewidth}
        \centering
        \includegraphics[width=\linewidth]{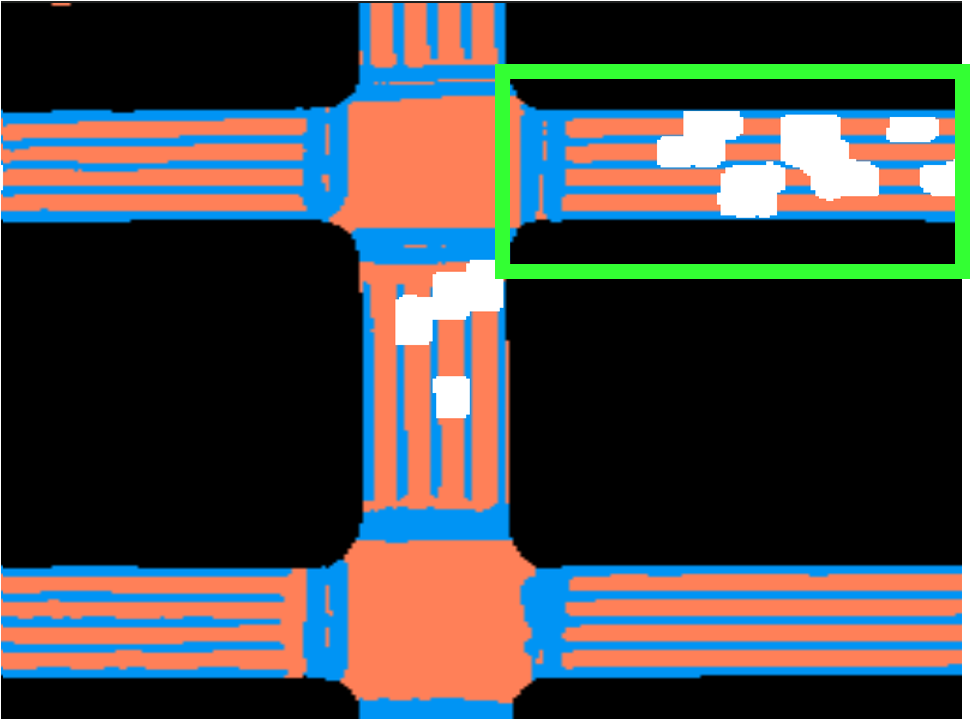}
    \end{subfigure}
    \hfill
    \begin{subfigure}[b]{0.16\linewidth}
        \centering
        \includegraphics[width=\linewidth]{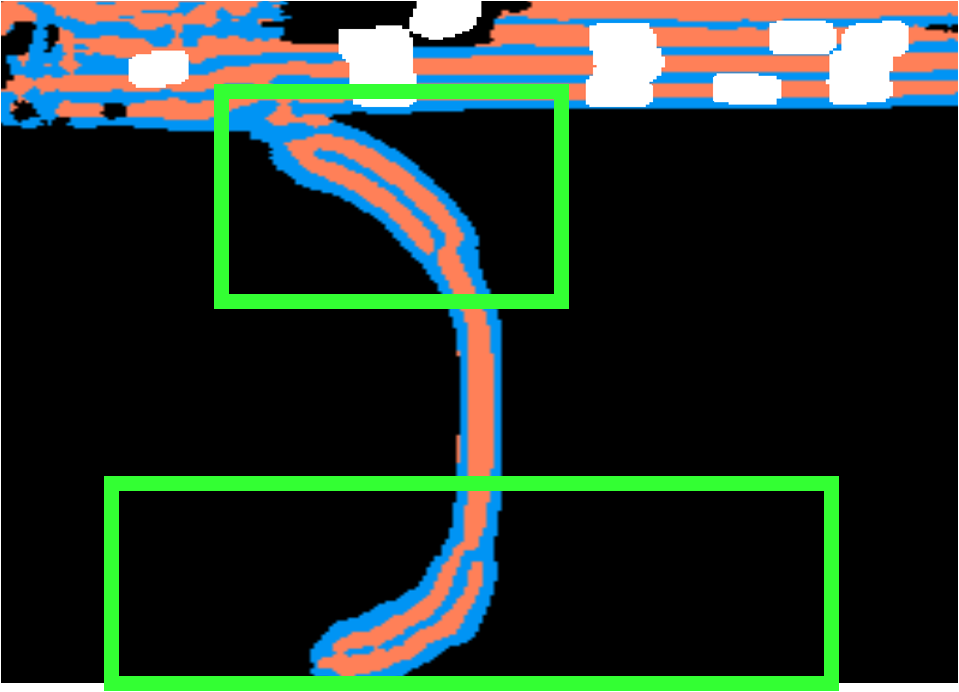}
    \end{subfigure}
    \hfill
    \begin{subfigure}[b]{0.16\linewidth}
        \centering
        \includegraphics[width=\linewidth]{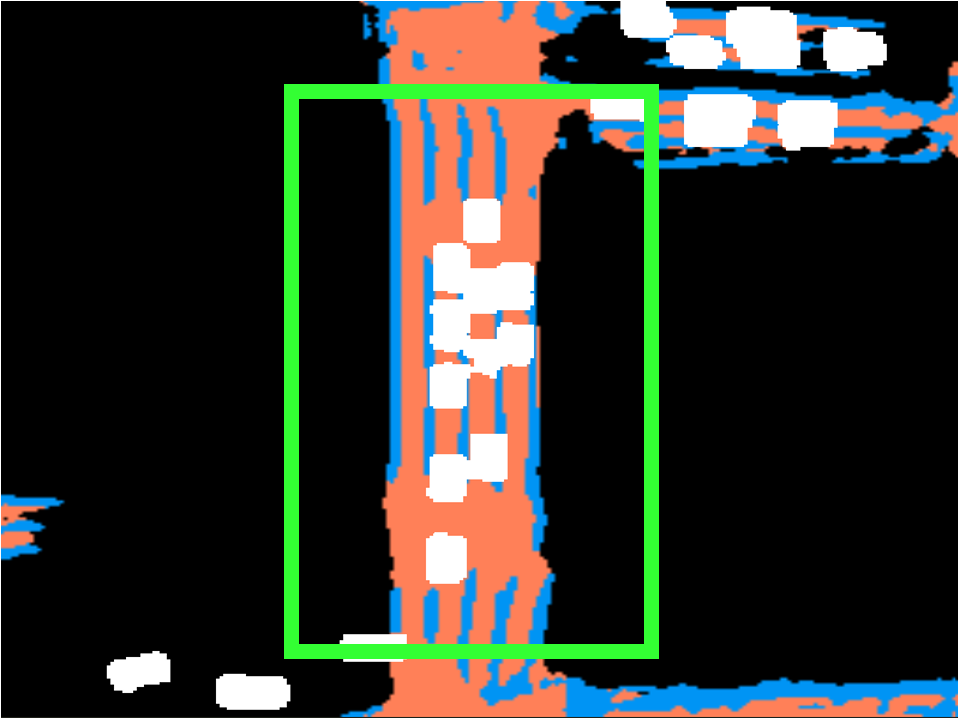}
    \end{subfigure}
    \begin{subfigure}[b]{0.16\linewidth}
        \centering
        \includegraphics[width=0.8\linewidth]{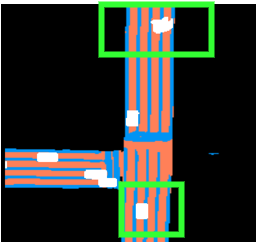}
    \end{subfigure}
    \hspace{-0.1cm}
    \begin{subfigure}[b]{0.16\linewidth}
        \centering
        \includegraphics[width=0.9\linewidth]{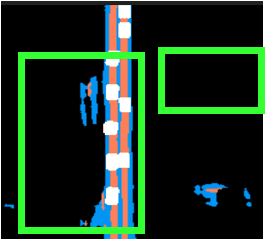}
    \end{subfigure}
    
    \caption{Qualitative Comparison of Ground Truth, CoBEVT, and \ours. \ours performs better than CoBEVT in estimating dynamic object segments (Column 1 and 4), road segments (Column 2 and 5), and lane segments (Column 3 and 5).}
    \label{fig:qual_analysis}
\vspace{-1mm}
\end{figure}
%

%% file: fig_latex/fig_localization_inference.tex
\begin{wrapfigure}{r}{0.62\textwidth}
    \centering
    \vspace{-11mm}
    \begin{minipage}{0.3\textwidth}
        \centering
        \includegraphics[width=\linewidth]{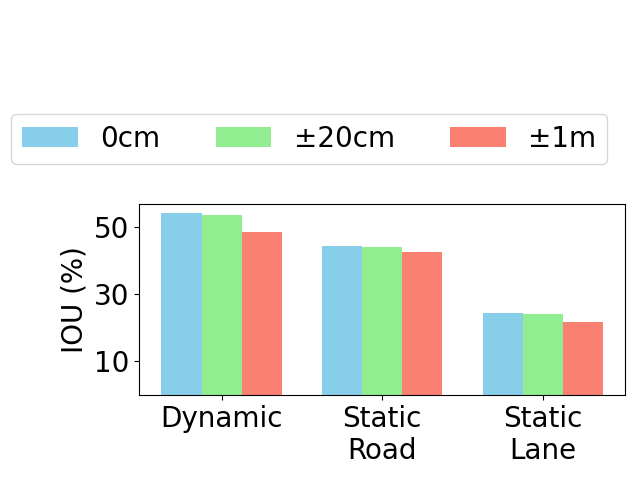}
        \vspace{-4mm}
        \captionof{figure}{\shilpa{\ours\ shows comparable performance under localization error at inference.}}
        \label{fig:localization_inference}
    \end{minipage}
    \hfill
    \begin{minipage}{0.3\textwidth}
        \centering
        \includegraphics[width=\linewidth]{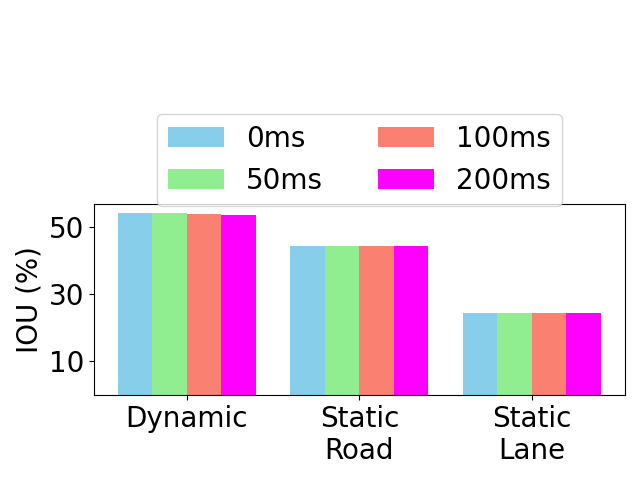}
        \vspace{-4mm}
        \captionof{figure}{\shilpa{\ours\ shows comparable performance under latency at inference.}}
        \label{fig:latency_inference}
    \end{minipage}

    \vspace{-3mm}
\end{wrapfigure}



%% file: fig_latex/fig_eval_network_overhead.tex
%
\begin{wrapfigure}{r}{0.4\textwidth}
    \centering
    \vspace{-6mm}
    \includegraphics[width=0.4\textwidth]{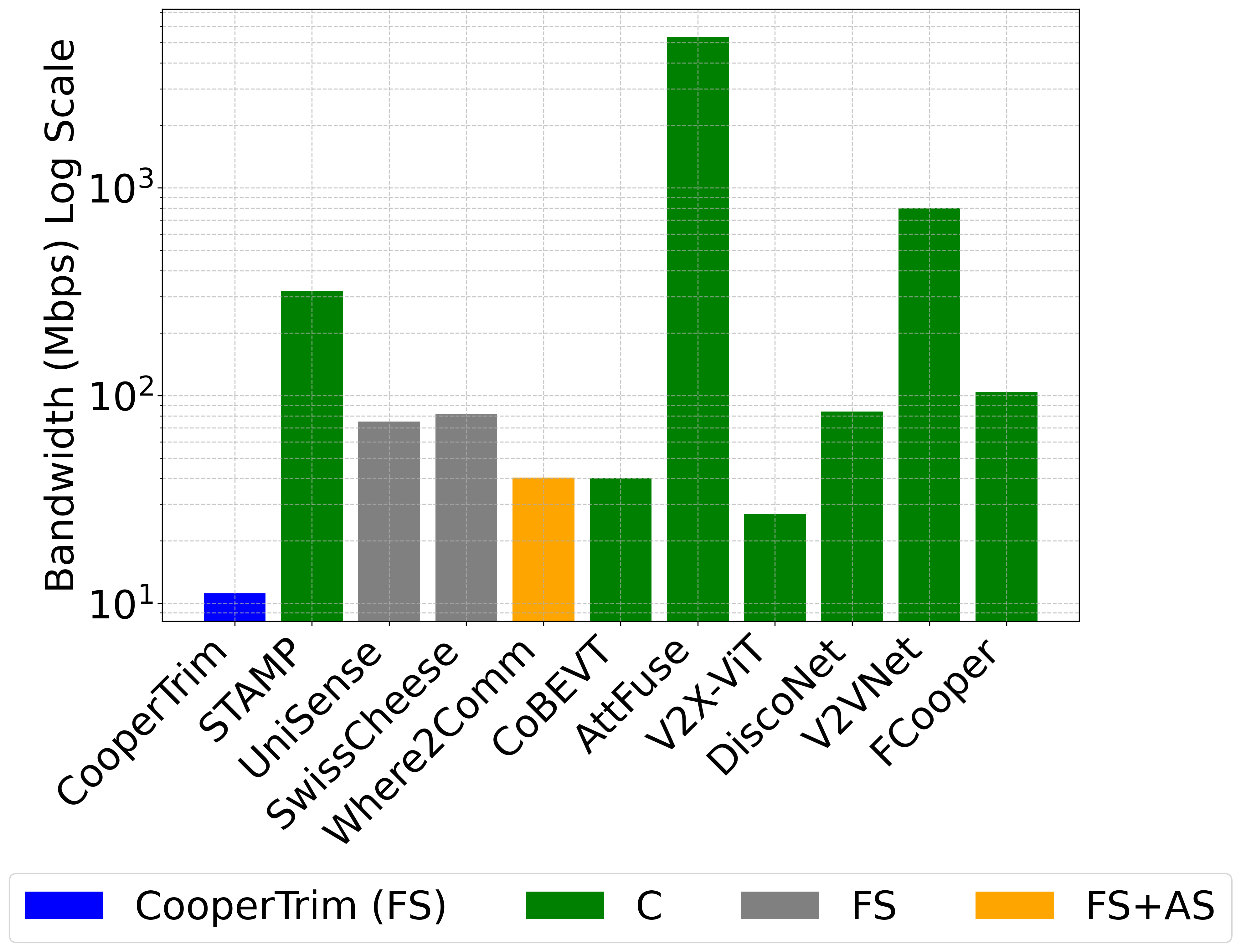}
    \vspace{-7mm}
    \caption{Across-the-board Bandwidth Comparison. C: Compression. FS: Feature Selection. AS: Agent Selection. \ours consumes the lowest bandwidth.
    }
    \label{fig:bandwidth_comparison}
    \vspace{-6mm}
\end{wrapfigure}

%% file: table_latex/tab_system_overhead.tex


\begin{table}[!bhtp]
\centering
\small
\caption{\shilpa{System Overhead Comparison. 
\ours\ has a marginal FPS reduction compared to CoBEVT due to added uncertainty and cross-attention modules for data selection.}}
\label{tab:system_overhead}
\begin{tabular}{r || c c c}
Method  & Configuration & FPS & Latency per Frame \\ 
\hline
\hline
CoBEVT & Dynamic & 10.59 & 92.9 ms\\
\ours & Dynamic & 8.58 & 94.7 ms\\
\hline
CoBEVT & Static & 10.59 & 93.1 ms\\
\ours & Static & 8.58 & 119.5 ms\\
\end{tabular}
\vspace{-8mm}
\end{table}

%% file: sections/8_conclusion.tex
\section{Conclusion}
\shilpa{In this paper, we present \ours, an adaptive feature selection framework for cooperative perception that enhances representation learning via temporal uncertainty-driven feature selection for bandwidth-efficient, accurate perception in multi-agent systems. It tackles both relevance (identifying key features) and quantity (optimal sharing based on complexity) assessment. Using $\epsilon$-greedy training, \ours balances bandwidth and performance. Evaluated on semantic segmentation and 3D detection, \ours achieves up to 80.28\% and 72.52\% bandwidth reduction, while maintaining accuracy. Compared to baselines, it boosts IoU by 45.54\% with 72\% less bandwidth, and shows competitive IoU at 1.46\% bandwidth usage under 32x compression. \ours adapts to dynamics, aiding real-world use, and stays robust to localization errors and communication latency.}

\newpage
\section*{Acknowledgement}
This research was sponsored by the OUSD (R\&E)/RT\&L and was accomplished under Cooperative Agreement Number W911NF-20-2-0267. The views and conclusions contained in this document are those of the authors and should not be interpreted as representing the official policies, either expressed or implied, of the ONR, ARL and OUSD(R\&E)/RT\&L or the U.S. Government. The U.S. Government is authorized to reproduce and distribute reprints for Government purposes notwithstanding any copyright notation herein.

%% file: sections/z_appendix.tex
\newpage

\appendix
\section{Appendix}

\subsection{Lagrange Multiplier Adjustment}
\label{app:lagrange_adjustment}
This section explains the dynamic adjustment of Lagrange multiplier for training the loss in Equation~\ref{eqn:lagrange_loss}
The Lagrange multiplier, represented as $\lambda$, acts as a penalty term in the loss function to enforce the constraint on the percentage of selected features. It dynamically adjusts based on training progress (epoch count) to balance the primary perception loss ($L(C(\theta))$) and the bandwidth constraint ($\text{$P(C(\theta))$} = C_{1.6}$).

\begin{itemize}[leftmargin=*]
    \item \textbf{Initial Phase (epoch $\leq$ ITC)}: For the first few epochs (defined by Initial Tuning Epochs $\text{ITC}$), $\lambda = 0.0$, allowing the model to focus solely on minimizing perception loss without bandwidth constraints for a strong start.
    \item \textbf{Periodic Update (every 10th epoch after initial tuning)}: At every 10th epoch, $\lambda$ scales exponentially with $\text{scaling factor SF} = \frac{P(C(\theta))}{100.0}$ and $\lambda = \lambda \cdot 2^{\text{SF}}$, aggressively pushing the model towards the target constraint if far off.
    \item \textbf{Incremental Scaling (other epochs after initial tuning)}: In other epochs, $\lambda$ increases linearly as $\lambda = \lambda \cdot (1 + 0.1 \cdot \lfloor \frac{\text{epoch} - \text{ITC}}{10} \rfloor)$, ensuring a gradual push towards the constraint without abrupt loss changes.
\end{itemize}

The strategy starts with unconstrained optimization for initial learning, then introduces and intensifies constraint enforcement over time, using periodic strong adjustments for major deviations and steady increments for fine-tuning.

\subsection{Mathematical Proof for Effectiveness of $\epsilon$-Greedy Training}
\label{app:proof_for_EG_training}

We provide a detailed proof to demonstrate the effectiveness of the $\epsilon$-greedy training strategy (Section~\ref{sec:training}) by analyzing the bias of the gradient estimator. This proof shows reductions in bias compared to a baseline of using only partial data. We assume perfect transmission, meaning all variability and bias stem from the inherent properties of the data (full or partial) rather than external noise.

\subsection*{1. Setup and Notation}
\begin{itemize}[leftmargin=*]
    \item \textbf{Datasets}: Let $D_{\text{full}}$ represent the complete dataset, and $D_{\text{partial}}$ be a subset of $D_{\text{full}}$, representing partial data.
    \item \textbf{True Gradient}: The target gradient to estimate is $\nabla L(D_{\text{full}})$, reflecting the loss over the entire dataset.
    \item \textbf{Transmission Assumption}: Under perfect transmission, computed gradients are unaffected by external noise, so bias or variance arises solely from the data used.
    \item \textbf{$\epsilon$-Greedy Gradient Estimator}: Define the gradient estimator under the $\epsilon$-greedy strategy as $\nabla L_{\epsilon}$, which is:
    \begin{itemize}
        \item $\nabla L(D_{\text{full}})$ with probability $\epsilon$
        \item $\nabla L(D_{\text{partial}})$ with probability $1 - \epsilon$
    \end{itemize}
\end{itemize}

\subsection*{2. Bias Analysis of the Gradient Estimator}
The bias of an estimator is the difference between its expected value and the true value. Here, the true gradient is $\nabla L(D_{\text{full}})$, as full data provides the most accurate representation of the loss landscape.

\subsubsection*{Expected Value of the Estimator}
The expected value of the $\epsilon$-greedy gradient estimator is:
\[
\mathbb{E}[\nabla L_{\epsilon}] = \epsilon \cdot \mathbb{E}[\nabla L(D_{\text{full}})] + (1 - \epsilon) \cdot \mathbb{E}[\nabla L(D_{\text{partial}})]
\]

\subsubsection*{Bias Calculation}
The bias is defined as:
\[
\text{Bias}(\nabla L_{\epsilon}) = \mathbb{E}[\nabla L_{\epsilon}] - \mathbb{E}[\nabla L(D_{\text{full}})]
\]

Substituting the expected value:
\[
\text{Bias}(\nabla L_{\epsilon}) = \left( \epsilon \cdot \mathbb{E}[\nabla L(D_{\text{full}})] + (1 - \epsilon) \cdot \mathbb{E}[\nabla L(D_{\text{partial}})] \right) - \mathbb{E}[\nabla L(D_{\text{full}})]
\]

Simplifying step-by-step:
\begin{align*}
\text{Bias}(\nabla L_{\epsilon}) &= \epsilon \cdot \mathbb{E}[\nabla L(D_{\text{full}})] + (1 - \epsilon) \cdot \mathbb{E}[\nabla L(D_{\text{partial}})] - \mathbb{E}[\nabla L(D_{\text{full}})] \\
&= (\epsilon - 1) \cdot \mathbb{E}[\nabla L(D_{\text{full}})] + (1 - \epsilon) \cdot \mathbb{E}[\nabla L(D_{\text{partial}})] \\
&= (1 - \epsilon) \cdot \left( \mathbb{E}[\nabla L(D_{\text{partial}})] - \mathbb{E}[\nabla L(D_{\text{full}})] \right)
\end{align*}

Thus, the bias expression is:
\[
\text{Bias}(\nabla L_{\epsilon}) = (1 - \epsilon) \cdot \left( \mathbb{E}[\nabla L(D_{\text{partial}})] - \mathbb{E}[\nabla L(D_{\text{full}})] \right)
\]

\subsection*{3. Explanation of Specific Cases and Bias Scaling}
We explore the implications of the bias expression under different scenarios and explain the origin of the bias scaling by $1 - \epsilon$.

\subsubsection*{Case 1: Unbiased Partial Data Gradient}
If the partial data gradient is unbiased, i.e., $\mathbb{E}[\nabla L(D_{\text{partial}})] = \mathbb{E}[\nabla L(D_{\text{full}})]$, the difference term is zero:
\[
\mathbb{E}[\nabla L(D_{\text{partial}})] - \mathbb{E}[\nabla L(D_{\text{full}})] = 0
\]

Thus:
\[
\text{Bias}(\nabla L_{\epsilon}) = (1 - \epsilon) \cdot 0 = 0
\]

In this ideal scenario, there is no bias in the $\epsilon$-greedy estimator regardless of $\epsilon$. This occurs when $D_{\text{partial}}$ perfectly represents $D_{\text{full}}$, which is rare in practice due to sampling variability.

\subsubsection*{Case 2: Biased Partial Data Gradient and Origin of Bias Scaling}
If there is a systematic difference, i.e., $\mathbb{E}[\nabla L(D_{\text{partial}})] \neq \mathbb{E}[\nabla L(D_{\text{full}})]$, define the inherent bias of using only partial data as:
\[
\text{Bias}(\nabla L(D_{\text{partial}})) = \mathbb{E}[\nabla L(D_{\text{partial}})] - \mathbb{E}[\nabla L(D_{\text{full}})]
\]

Substituting into the bias expression:
\[
\text{Bias}(\nabla L_{\epsilon}) = (1 - \epsilon) \cdot \text{Bias}(\nabla L(D_{\text{partial}}))
\]

Considering the magnitude of bias:
\[
|\text{Bias}(\nabla L_{\epsilon})| = (1 - \epsilon) \cdot |\text{Bias}(\nabla L(D_{\text{partial}}))|
\]

Since $0 \leq \epsilon \leq 1$, the factor $1 - \epsilon < 1$ for any $\epsilon > 0$, implying:
\[
|\text{Bias}(\nabla L_{\epsilon})| < |\text{Bias}(\nabla L(D_{\text{partial}}))|
\]

\textbf{Origin of Scaling}: The factor $1 - \epsilon$ comes from the probabilistic weighting in the $\epsilon$-greedy strategy. The expected gradient is a weighted average where the partial data gradient (carrying inherent bias) is weighted by $1 - \epsilon$, and the unbiased full data gradient is weighted by $\epsilon$. Thus, the contribution of the biased gradient is reduced, scaling the bias by $1 - \epsilon$.

\subsection*{4. Summary of Bias Reduction Mechanism}
\begin{itemize}[leftmargin=*]
    \item \textbf{Scaling Effect}: The bias scales by $1 - \epsilon$ due to the probabilistic blending of full and partial data gradients.
    \item \textbf{Extreme Cases}:
    \begin{itemize}
        \item When $\epsilon = 0$, only partial data is used, and the bias equals the full inherent bias of the partial data gradient.
        \item When $\epsilon > 0$, incorporating full data reduces the weight of the biased partial data gradient to $1 - \epsilon$, scaling down the bias.
    \end{itemize}
    \item \textbf{Bias Comparison}: The inequality $|\text{Bias}(\nabla L_{\epsilon})| < |\text{Bias}(\nabla L(D_{\text{partial}}))|$ holds for any $\epsilon > 0$, showing that even a small probability of using full data mitigates bias.
\end{itemize}

This mechanism highlights the $\epsilon$-greedy approach's advantage in balancing computational efficiency (using partial data) with accuracy (using full data to correct bias).

As shown, the bias of the gradient estimator is reduced by a factor of $(1 - \epsilon)$ compared to using only partial data, pulling the expected gradient closer to the true gradient $\mathbb{E}[\nabla L(D_{\text{full}})]$. This improves the accuracy of the optimization direction, leading to better convergence toward the true optimum of $L$.  Under perfect transmission, with no external noise, the $\epsilon$-greedy strategy effectively leverages the strengths of full data (lower bias and variance) while maintaining efficiency by using partial data most of the time, achieving a favorable trade-off.

\subsection{Preliminary Experiments for Uncertainty based Selection on Segmentation Task} 
\input{fig_latex/fig_motivation}

\subsection{"Trimming" Cooperative Segmentation Baselines - Additional } Additional presentation for Section~\ref{sec:experiments} Cooperative Segmentation Baseline experiments in Table~\ref{tab:application_to_methods}. \ours\ maintains comparable performance accuracies while reducing bandwidth significantly, achieving an average 80.28\% improvement in network overhead over the baselines.

\input{table_latex/trimming}


\subsection{Robustness of Threshold-Based Method in \ours}
\label{app:intuition}
\shilpa{In \ours, the decision to request data is based on a combination of two key factors: the feature representation (which encodes the scene description as perceived by the ego vehicle in the current frame) and the uncertainty estimation (which compares the past comprehensive scene description with the current ego vehicle's scene understanding to detect discrepancies, e.g., objects that existed previously but are no longer present in ego's perception range, or vice versa). This dual mechanism ensures robustness even in subtle change scenarios.}

\shilpa{For instance, in a situation where the overall scene appears static (low temporal change), if the ego vehicle's current feature perception changes---due to factors like occlusion or the sudden appearance of a small object such as a pedestrian---our method will detect this deviation. As a result, the system will trigger more data requests to ensure critical information is not missed. Conversely, if both the scene and the ego vehicle's current perception remain largely unchanged, the system will minimize data requests, optimizing bandwidth usage.}

\shilpa{To validate this behavior, we analyzed specific frame sequences in our experiments.}
\vspace{-3mm}
\begin{itemize}
    \item \textbf{Frames 1960-1970}: \shilpa{These frames exhibit significant changes in the scene, associated with high data requests (Figure~\ref{fig:qual_analysis}). The changes were verified using 4 camera images and ground truth dynamic data segmentation, confirming the system's responsiveness to dynamic scenarios, as shown in Figure~\ref{fig:robust_high_request}.}
    
    \input{fig_latex/fig_robust_high_req}

    \item \textbf{Frames 940-950}: \shilpa{These frames show minimal changes, associated with low data requests (Figure~\ref{fig:qual_analysis}), as shown in Figure~\ref{fig:robust_high_request}. The lack of significant changes was similarly verified using 4 camera images and ground truth dynamic data segmentation, demonstrating the system's ability to conserve bandwidth in stable conditions.}

\input{fig_latex/fig_robust_low_req}
    
\end{itemize}

\subsection{Comparison of \ours with Compression-based Methods}
\label{app:compression_experiment}
\shilpa{We have added comparative results with network-efficient compression-based methods and updated Section~\ref{sec:compression_based_methods}, where we have compared \ours\ with SwissCheese~\citep{xie2024swisscheese} and Where2Comm~\citep{hu2022where2comm}, the closest related works in feature selection and communication efficiency. While other feature selection methods like Unisense~\citep{ren2025unisense} exist, their source code is unavailable for direct comparison. Compression-based approaches, such as those used in CoBEVT~\citep{xu2023cobevt} and AttFuse~\citep{xu2022opv2v}, are compatible with \ours. We show further bandwidth reduction using compression after feature selection. \ours\ significantly outperforms compression-only methods after applying post-selection compression, while maintaining superior perception accuracy. Below are the detailed results for IoU performance and bandwidth usage, with \ours\ achieving a bandwidth usage as low as 1.46\% of total data in high-compression settings.}

\subsubsection*{IoU Performance Comparison after Compression}
\begin{table}[h]
\centering
\begin{tabular}{lccc}
\toprule
\textbf{Scenario} & \textbf{\ours\ IoU} & \textbf{CoBEVT IoU} & \textbf{AttFuse IoU} \\
\midrule
1x (Dynamic \& Static) & 54.03 & 50.23 & 32.20 \\
1x Lane & 24.45 & 23.79 & 24.14 \\
1x Road & 44.38 & 45.28 & 34.86 \\
8x (Dynamic \& Static) & 53.99 & 50.19 & 32.21 \\
8x Lane & 24.52 & 23.78 & 24.13 \\
8x Road & 44.43 & 45.27 & 34.86 \\
32x (Dynamic \& Static) & 50.32 & 49.63 & 32.19 \\
32x Lane & 24.62 & 23.77 & 24.14 \\
32x Road & 45.05 & 45.27 & 34.86 \\
\bottomrule
\end{tabular}
\end{table}

\subsubsection*{Bandwidth Usage Comparison after Compression}
\begin{table}[h]
\centering
\begin{tabular}{lccc}
\toprule
\textbf{Scenario} & \textbf{\ours BW} & \textbf{CoBEVT Compressed BW} & \textbf{AttFuse BW} \\
\midrule
1x (Dynamic \& Static) & 32.04\% & 100\% & 100\% \\
1x Lane & 22.77\% & 100\% & 100\% \\
1x Road & 22.77\% & 100\% & 100\% \\
8x (Dynamic \& Static) & 2.67\% & 5.98\% & 14.3\% \\
8x Lane & 9.25\% & 14.28\% & 17.42\% \\
8x Road & 9.25\% & 14.28\% & 17.42\% \\
32x (Dynamic \& Static) & 1.46\% & 3.76\% & 10.62\% \\
32x Lane & 6.06\% & 9.55\% & 13.71\% \\
32x Road & 6.06\% & 9.55\% & 13.71\% \\
\bottomrule
\end{tabular}
\end{table}

\subsection{Training Methods Comparison - Additional}
Corresponding to Section~\ref{sec:experiments} Training Methods Comparison, Figure~\ref{fig:ablation_study}  presents the training methods comparative study for training \ours, all using the loss function from Section~\ref{sec:training}.
\input{fig_latex/fig_eval_Ablation}

\subsection{Network Performance Metrics Under Realistic Conditions}

\shilpa{We evaluated \ours's performance by measuring key network-level metrics, including end-to-end latency and packet loss/retransmissions under loss rates of [0--10]\%.}


\subsubsection{Loss Rate and Impact}
\shilpa{During training, we assumed perfect transmission with no loss. However, during inference, loss rates were simulated randomly between [0, 10]\% by applying masks to evaluate robustness. The impact on performance metrics such as IoU and bandwidth (BW) under dynamic and static configurations is summarized in Table~\ref{tab:loss_impact}.}

\begin{table}[h]
    \centering
    \caption{Impact of Loss Rate on IoU and Bandwidth Metrics. \ours maintains stable IoU and BW metrics.}
    \label{tab:loss_impact}
    \begin{tabular}{|p{1.2cm}|p{2.2cm}|p{2.2cm}|p{2.5cm}|p{2.5cm}|p{2.2cm}|}
        \hline
        Loss Rate & Dynamic IoU (\%) & Dynamic BW (\%) & Static Lane IoU (\%) & Static Road IoU (\%) & Static BW (\%) \\
        \hline
        0\% & 54.03 & 32.04 & 24.45 & 44.38 & 23.77 \\
        10\% & 53.95 & 32.11 & 24.49 & 44.34 & 23.71 \\
        \hline
    \end{tabular}
\end{table}

\shilpa{Under simulated loss rates of 0–10\%, \ours\ maintains stable IoU and bandwidth metrics, with Dynamic IoU being comparable at 10\% loss. These results highlight \ours's robust fallback behavior under packet loss conditions up to 10\%, demonstrating its potential for reliable operation in real-world scenarios with imperfect network conditions.}

\subsection{Conformal Temporal Uncertainty}
\shilpa{Figure~\ref{fig:conformal_temporal_uncertainty} shows the calculation of uncertainty in intermediate representation using conformal prediction inspired quantile gating mechanism explained in Section~\ref{sec:sys_overview}.}
\input{fig_latex/conformal_temporal_uncertainty}

\subsection{LLM Usage}

In the preparation of this paper, Large Language Models (LLMs) were utilized as a supportive tool for polishing the writing and implementing minor code modifications at specific points. However, the core research ideas, code design, and overall framework are entirely our own. The LLMs played no role in the ideation or conceptualization of the research.

%% file: fig_latex/fig_motivation.tex
\begin{figure}[H]
    \centering
    \begin{subfigure}[]{\linewidth}
        \centering
        \includegraphics[width=\linewidth]{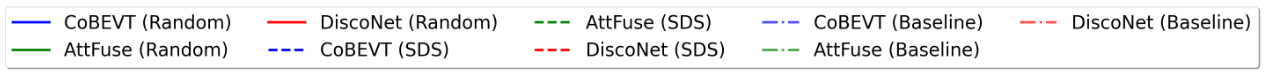}
        \label{fig:image4}
        \vspace{-0.55cm}
    \end{subfigure}  
    \begin{subfigure}[]{0.32\linewidth}
        \centering
        \vspace{-0.1cm}
        \includegraphics[width=\linewidth]{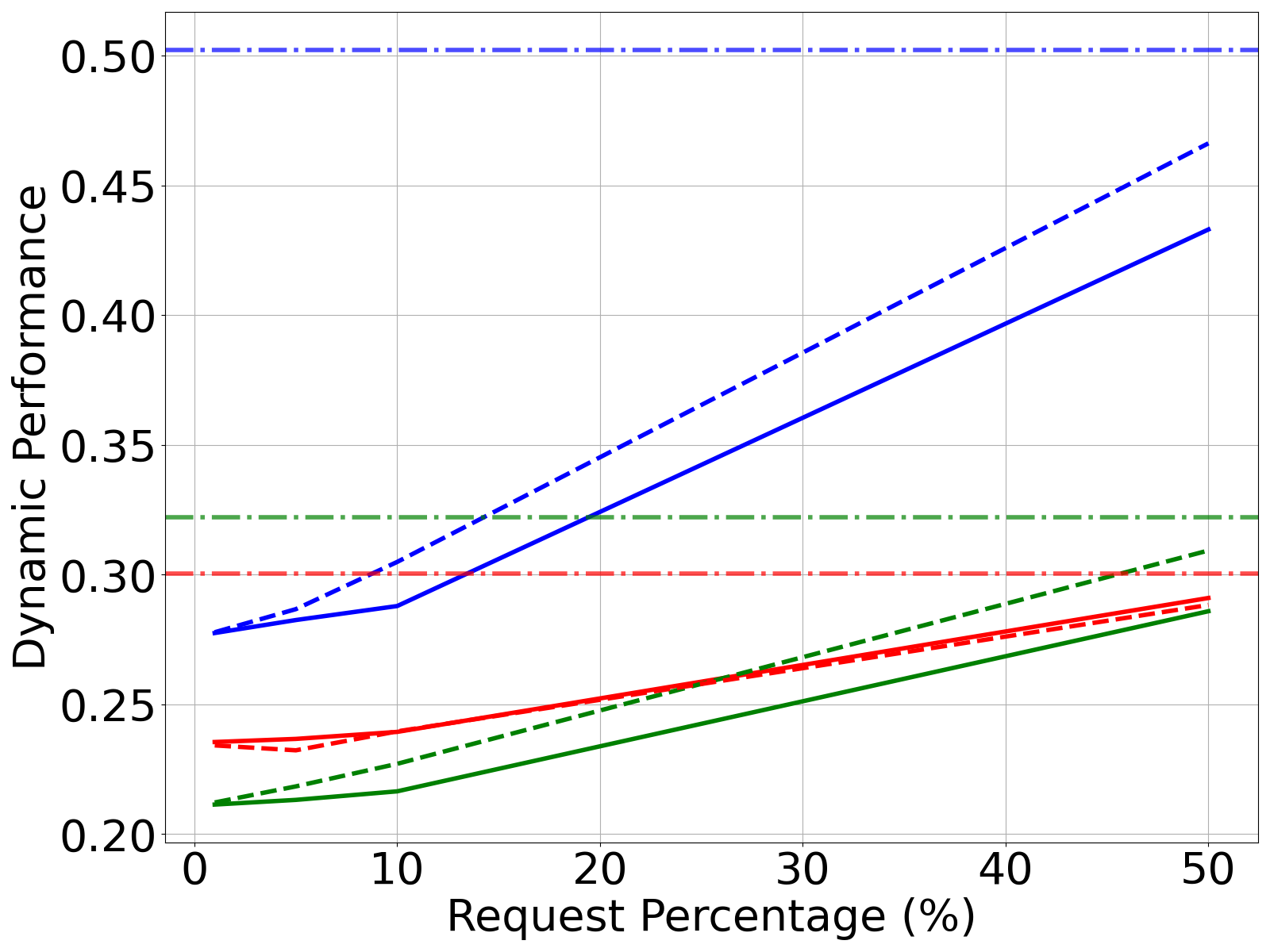}
        \caption{Dynamic}
        \label{fig:image1}
    \end{subfigure}
    \begin{subfigure}[]{0.32\linewidth}
        \centering
        \vspace{-0.5cm}
        \includegraphics[width=1.1\linewidth]{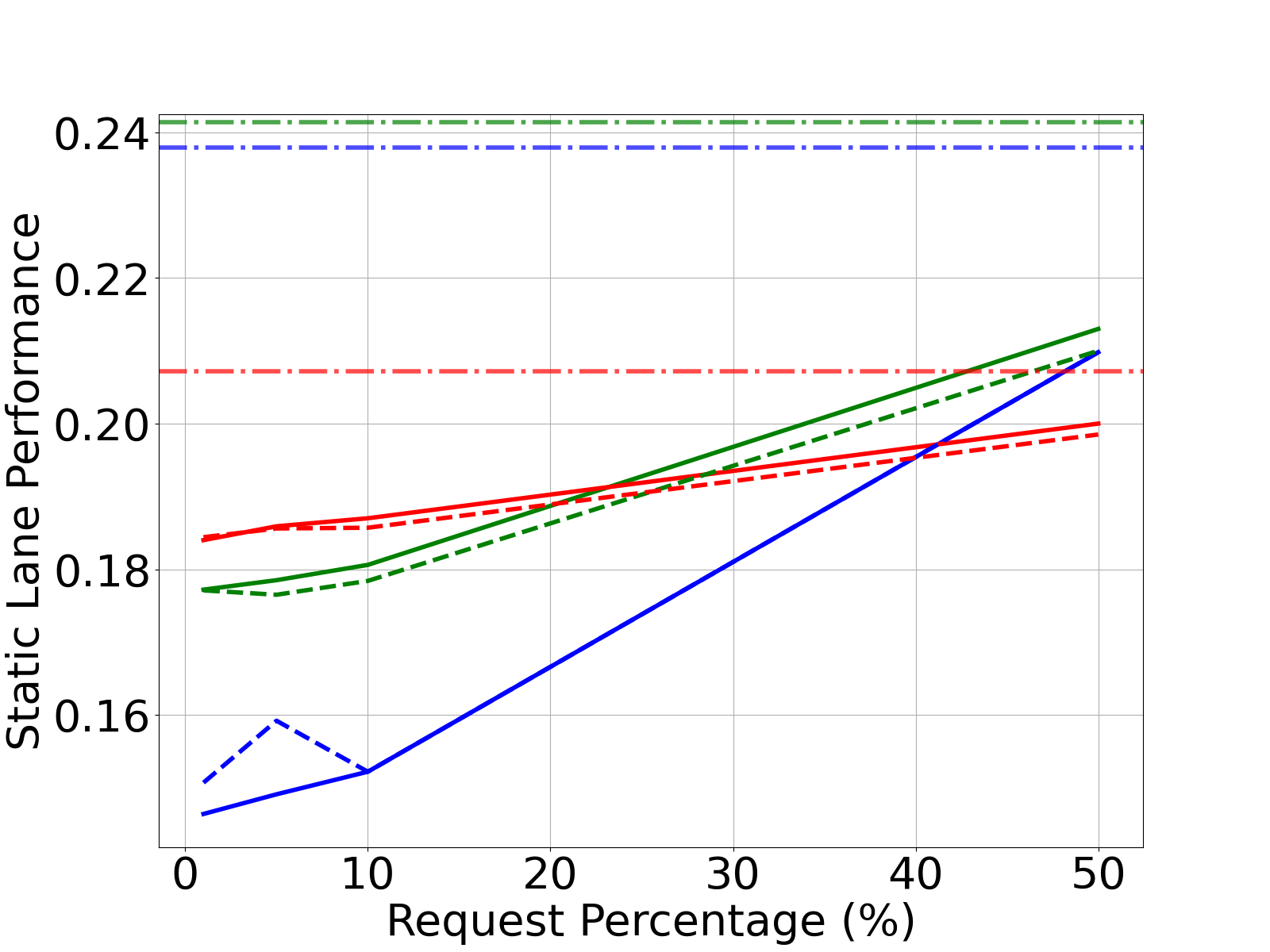}
        \caption{Static Lane}
        \label{fig:image2}
    \end{subfigure}
    \begin{subfigure}[]{0.32\linewidth}
        \centering
        \vspace{-0.5cm}
        \includegraphics[width=1.1\linewidth]{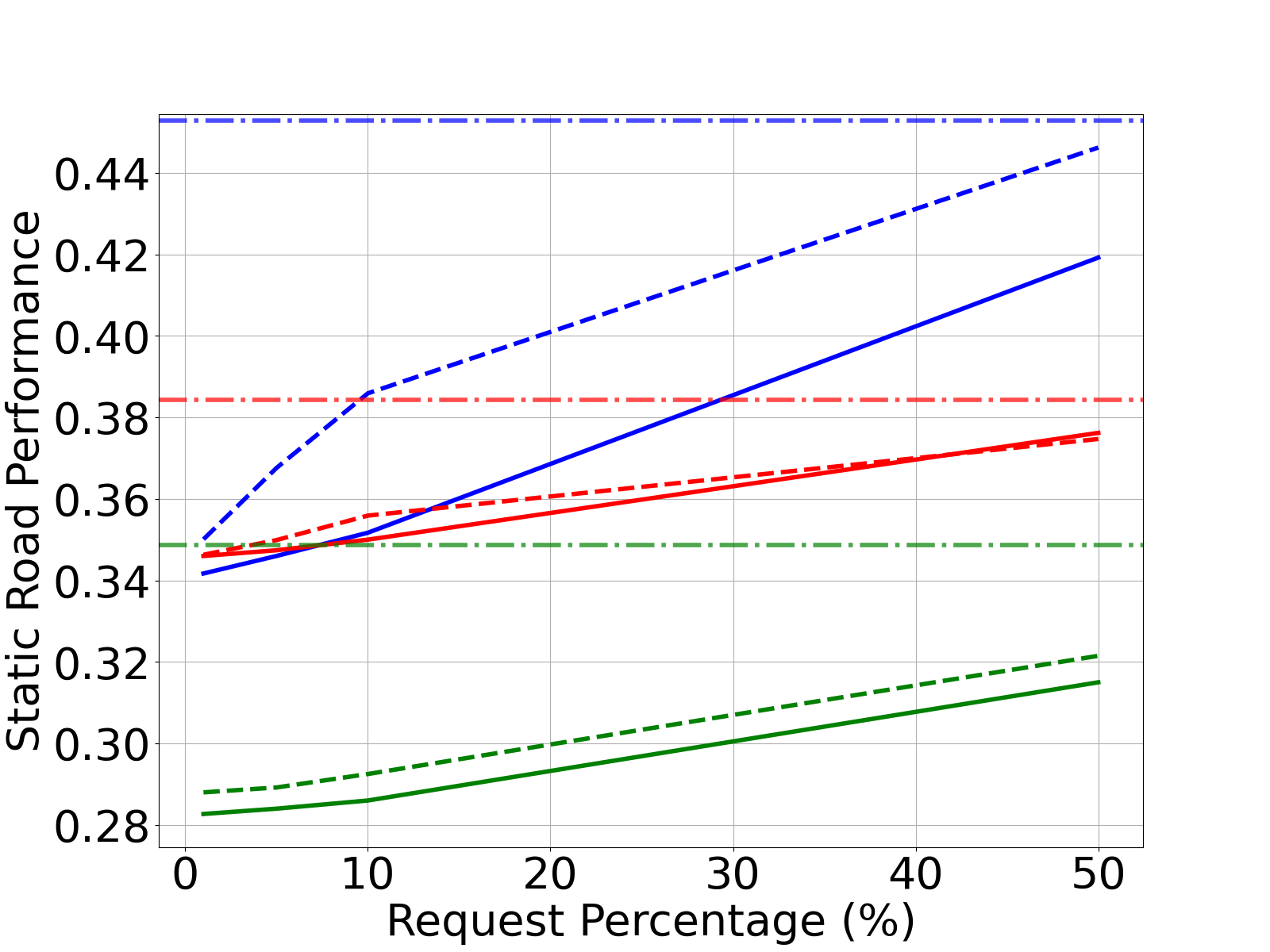}
        \caption{Static Road}
        \label{fig:image3}
    \end{subfigure}
    \caption{Motivation for \ours. Comparison of baseline, random channel selection and SD-based selection (SDS) for uncertainty guidance show uncertainty-guided selection often outperforms random selection, though baseline performance remains higher consistently. \ours addresses this accuracy gap through its proposed adaptive uncertainty driven selection method.}
    \label{fig:motivation}
\end{figure}

%% file: table_latex/trimming.tex
\begin{table}[H]
\centering
\caption{Application of \ours\ to Existing Cooperative Segmentation Methods. ``Baseline" and ``Ours" (\ours) show the respective  IOU \%, while ``BW" shows \ours bandwidth usage.}
\label{tab:application_to_methods}
\resizebox{\textwidth}{!}{
\begin{tabular}{r||c c c|c c c| c c c|c}
& \multicolumn{3}{c|}{\textbf{Dynamic}} & \multicolumn{3}{c|}{\textbf{Static Lane}}  & \multicolumn{3}{c|}{\textbf{Static Road}} & \textbf{Avg.} \\
\textbf{Methods} 
& \textbf{Baseline} & \textbf{Ours} & \textbf{BW}
& \textbf{Baseline} & \textbf{Ours} & \textbf{BW}
& \textbf{Baseline} & \textbf{Ours} & \textbf{BW}
& \textbf{BW (Mbps)} \\ 
\hline
\hline
CoBEVT 
& 50.23 & 54.03 & 32.04\% 
& 23.79 & 24.45 & 23.77\%  
& 45.28 & 44.38 & 23.77\%  & 11.16 \\ 
AttFuse 
& 32.20 & 30.90 & 24.76\%
& 24.14 & 23.93 & 17.39\% 
& 34.86 & 36.22 & 17.39\%  & 8.4 \\ 
DiscoNet 
& 30.03 & 30.80 & 10.65\%
& 20.72 & 22.05 & 9.72\%
& 38.43 & 40.02 & 9.72\% & 4.07
\end{tabular}
}
\end{table}

%% file: fig_latex/fig_robust_high_req.tex
\begin{figure}[H]
    \centering
    \includegraphics[width=0.8\linewidth]{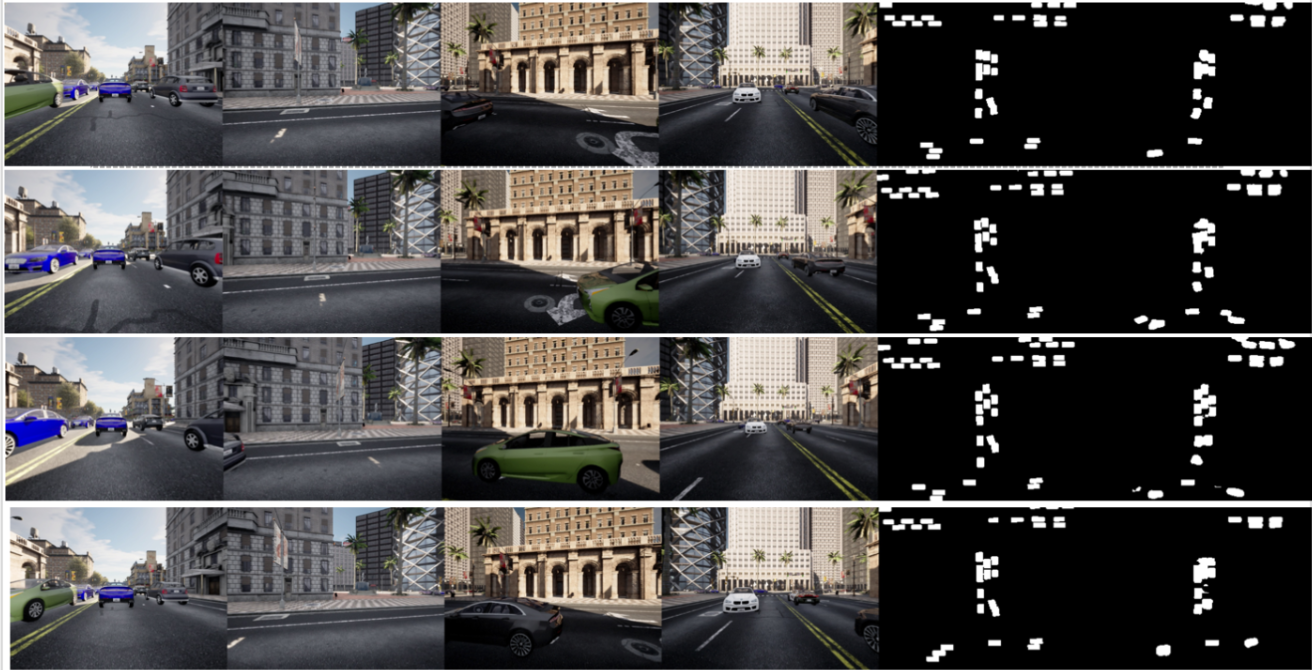}
    \caption{High data requests during significant scene changes in Frames 1960-1970}
    \label{fig:robust_high_request}
\end{figure}

%% file: fig_latex/fig_robust_low_req.tex
\begin{figure}[H]
    \centering
    \includegraphics[width=0.8\linewidth]{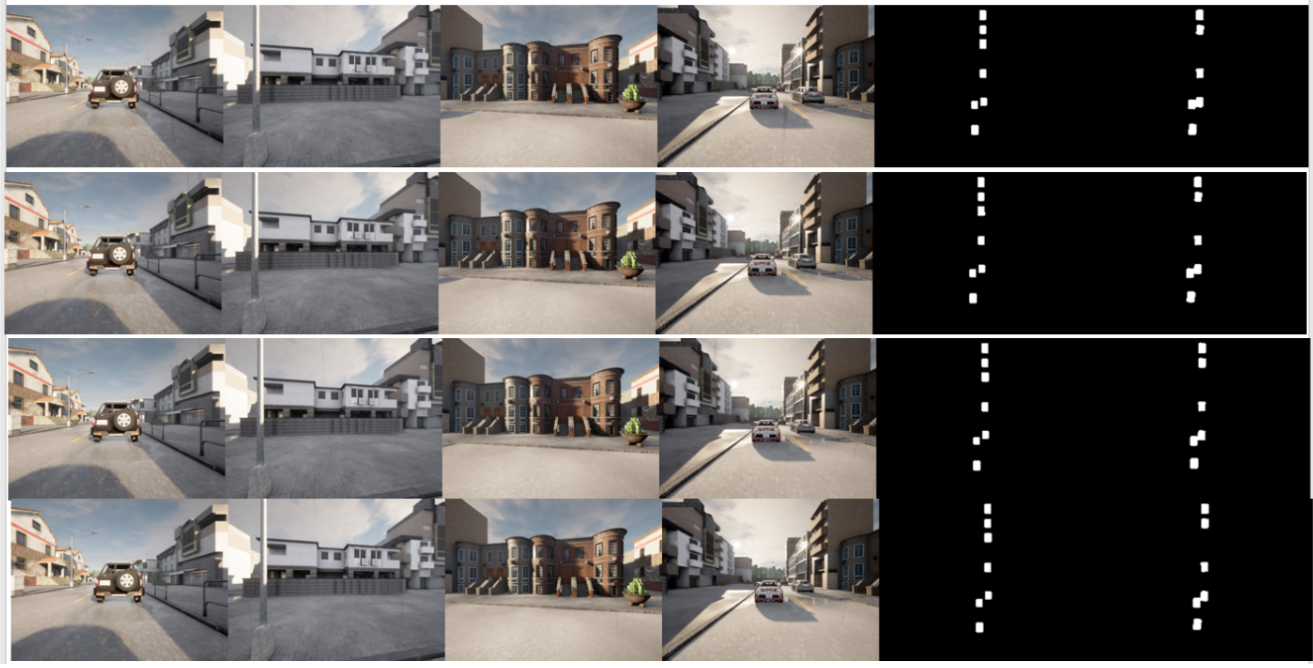}
    \caption{Low data requests during minimal scene changes in Frames 940-950}
    \label{fig:robust_low_request}
\end{figure}

%% file: fig_latex/fig_eval_Ablation.tex
\begin{figure}[!tbh]
        \centering
        \begin{subfigure}[b]{0.6\textwidth}
            \centering
            \includegraphics[width=1.05\textwidth]{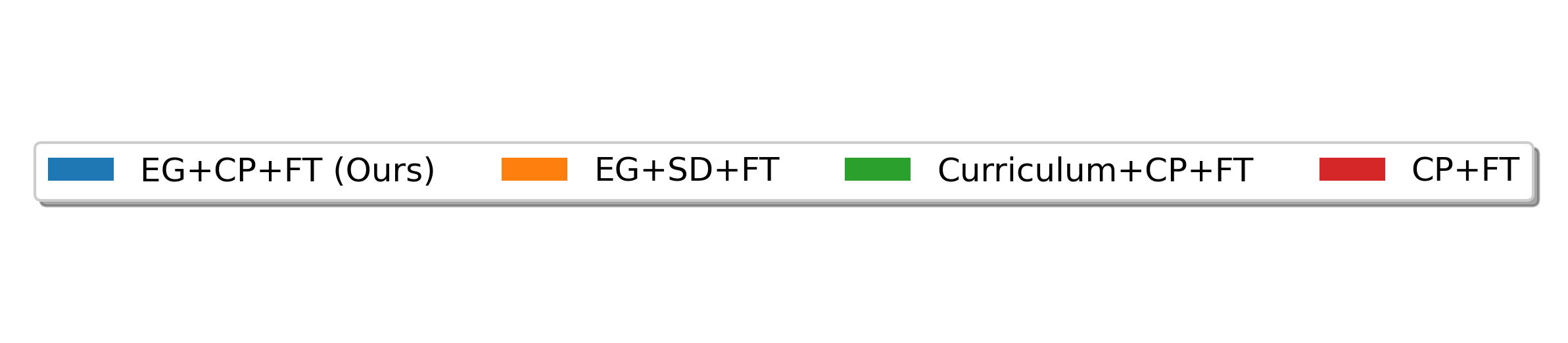}
            \label{fig:abl_static_lane}
        \end{subfigure}
        \vspace{-10mm}
        
        \begin{subfigure}[b]{0.3\textwidth}
            \centering
            \includegraphics[width=\textwidth]{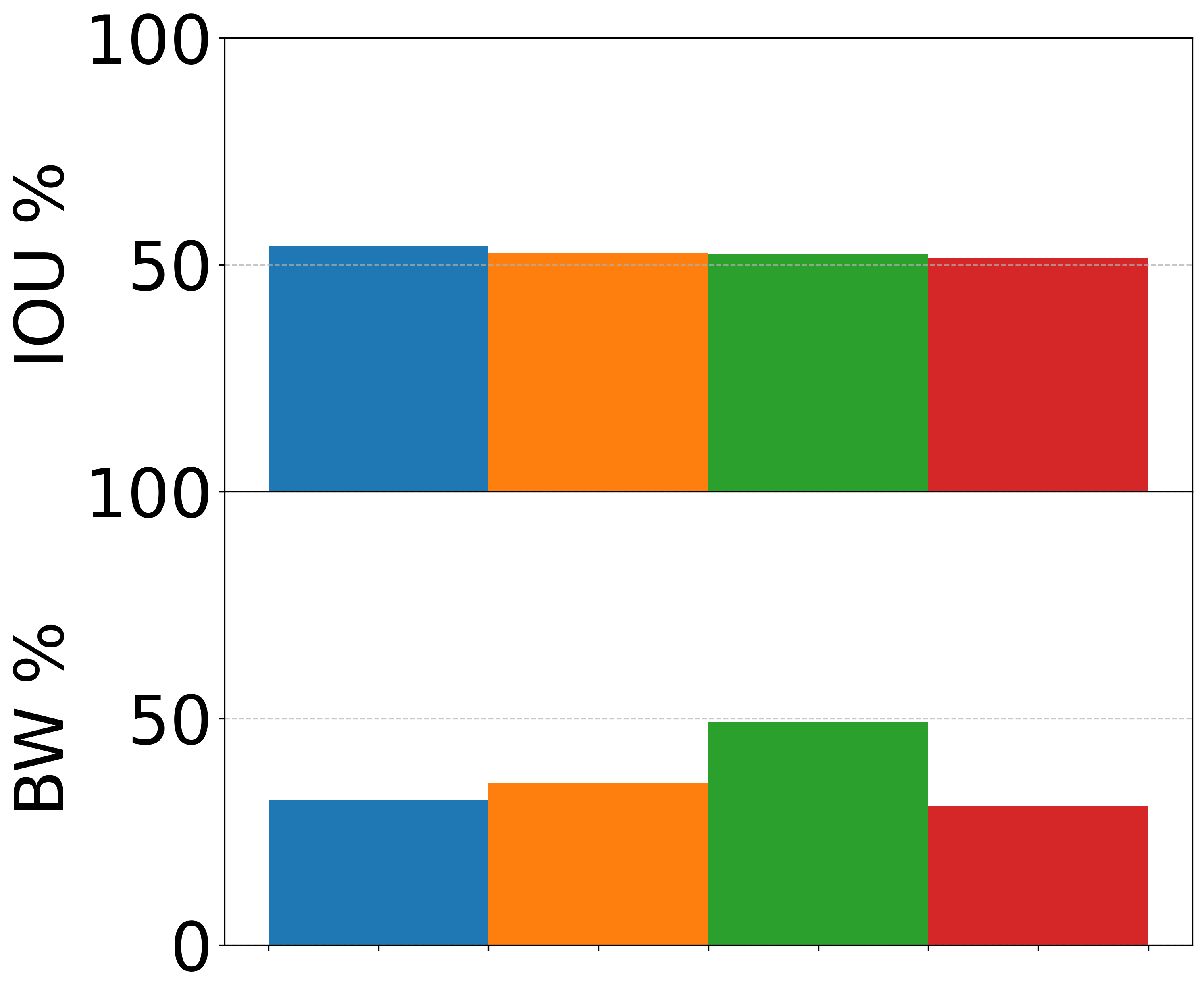}
            \caption{Dynamic}
            \label{fig:abl_dynamic}
        \end{subfigure}
        \begin{subfigure}[b]{0.3\textwidth}
            \centering
            \includegraphics[width=\textwidth]{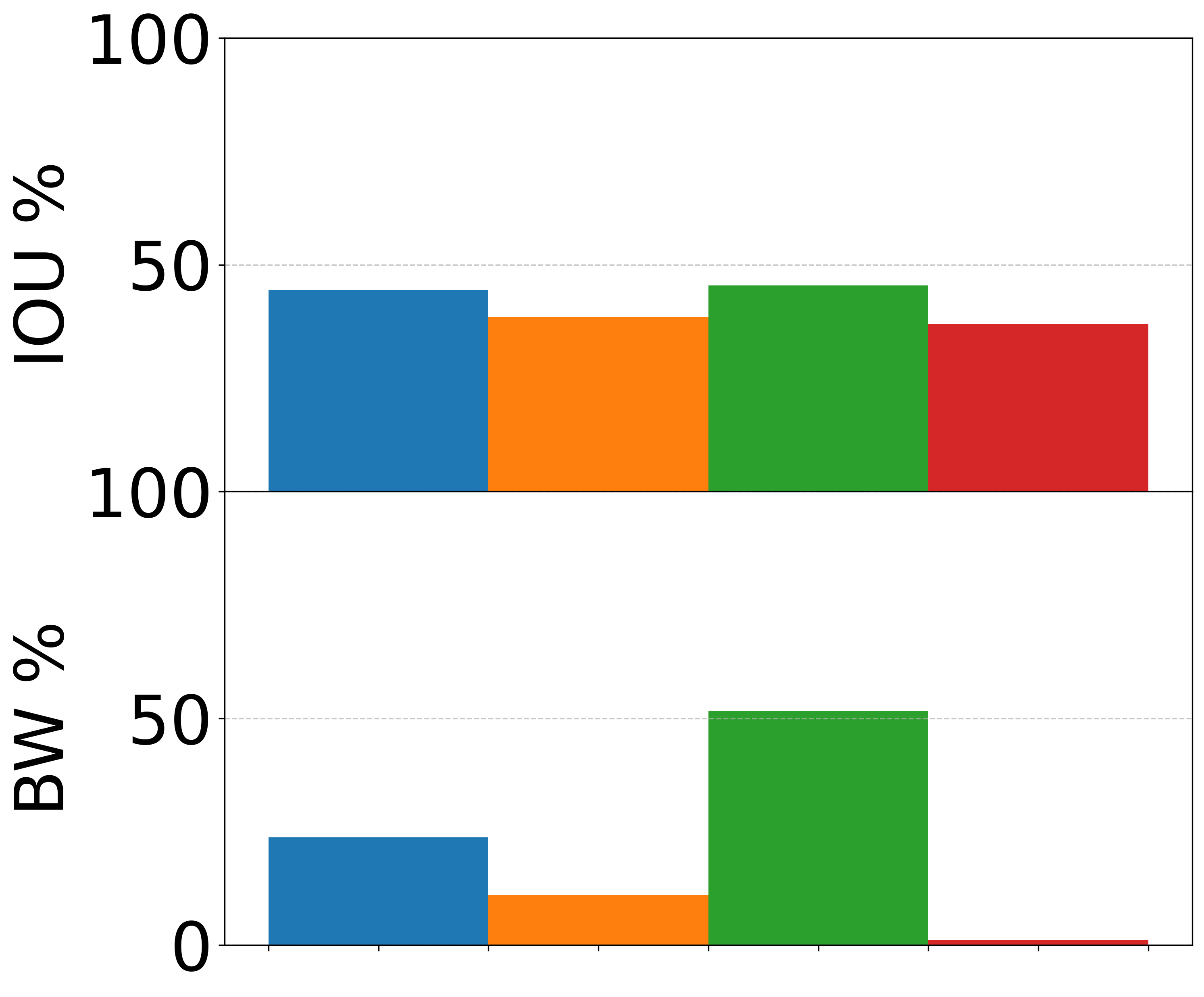}
            \caption{ Static Road}
            \label{fig:abl_static_road}
        \end{subfigure}
        \begin{subfigure}[b]{0.3\textwidth}
            \centering
            \includegraphics[width=\textwidth]{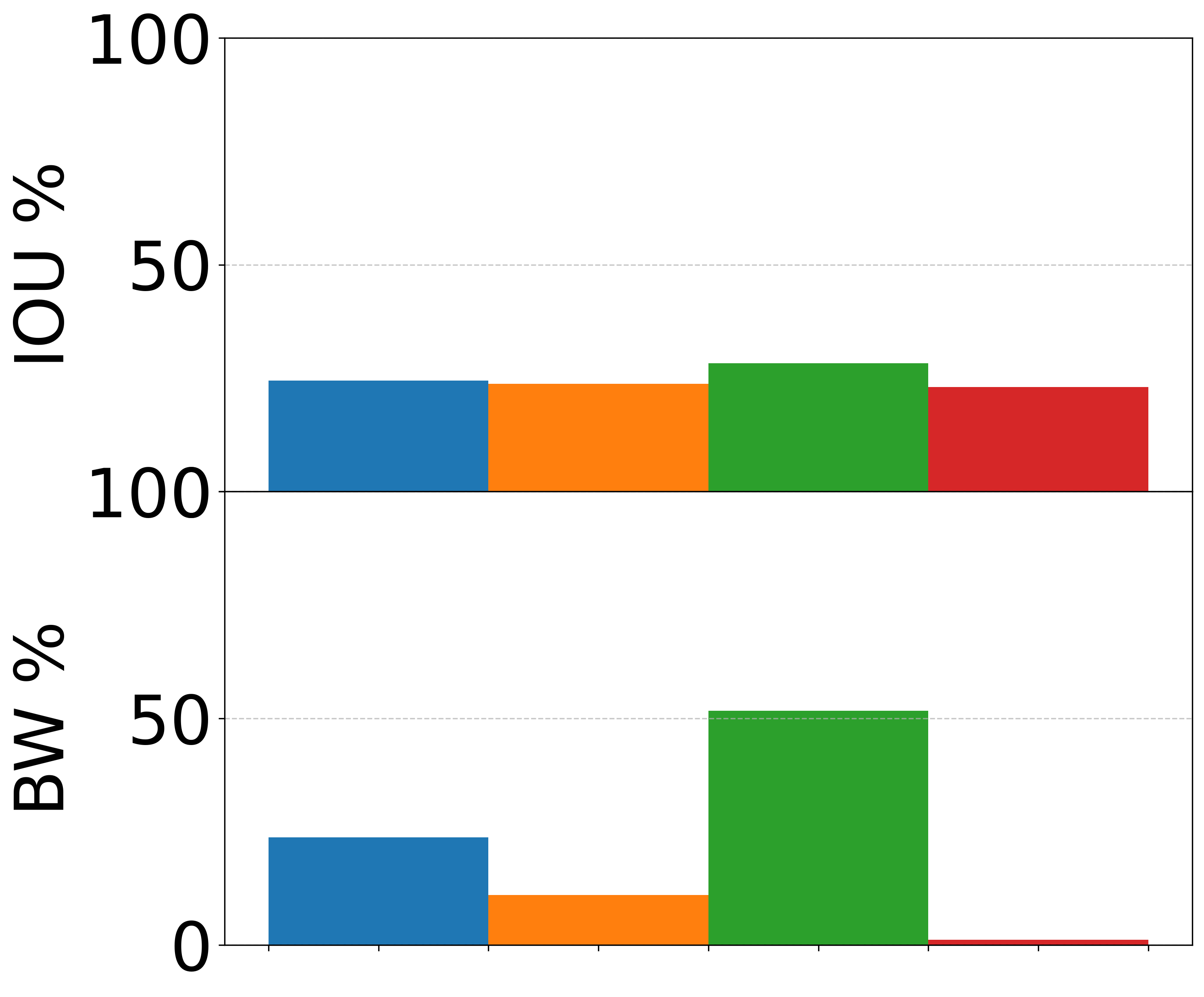}
            \caption{ Static Lane}
            \label{fig:abl_static_lane}
        \end{subfigure}
        
        \caption{Training Methods Comparison.  \ours balances task performance and network overhead better than other baselines.}
        \label{fig:ablation_study}
\end{figure}

%% file: fig_latex/conformal_temporal_uncertainty.tex
\begin{figure}
    \centering
    \includegraphics[width=0.99\linewidth]{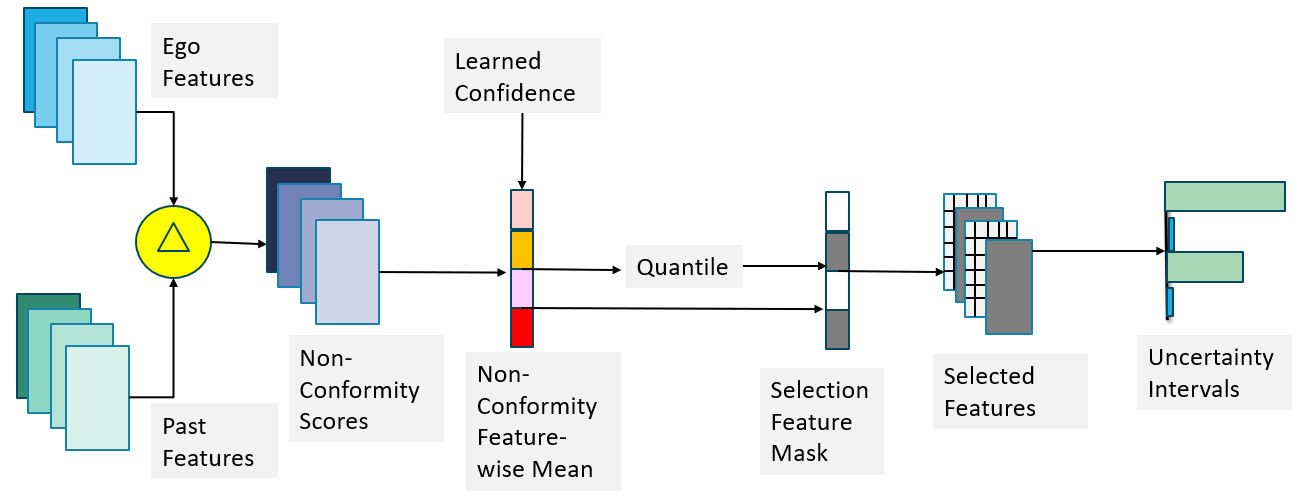}
    \caption{\ours estimates conformal temporal uncertainty by comparing the current frame’s encoded features with the ego vehicle’s fused features from recent frames, highlighting what has changed over time (Non-Conformity Scores). It then uses a learnable quantile-based gating rule to identify high-deviation (more uncertain) features as the most relevant candidates for request.}
    \label{fig:conformal_temporal_uncertainty}
\end{figure}